\newtheorem{thm}{Theorem}
\newtheorem{lem}[thm]{Lemma}
\def\x{{\mathbf x}}
\def\L{{\cal L}}
\def\x{\mathbf{x}}
\def\X{\mathbf{X}}
\def\y{\mathbf{y}}
\def\LL{\mathbf{L}}
\def\D{\mathbf{D}}
\def\G{\mathcal{G}}
\def\W{\mathbf{W}}
\def\M{\mathbf{M}}
\def\x{{\mathbf x}}
\def\L{{\cal L}}
\def\u{\mathbf{u}}
\def\v{\mathbf{v}}
\def\x{\mathbf{x}}
\def\X{\mathbf{X}}
\def\y{\mathbf{y}}
\def\L{\mathbf{L}}
\def\D{\mathbf{D}}
\def\TT{\mathbf{T}}
\def\G{\mathcal{G}}
\def\W{\mathbf{W}}
\def\M{\mathbf{M}}
\def\x{{\mathbf x}}
\def\u{\mathbf{u}}
\def\a{\bm{\alpha}}
\def\f{\mathbf{f}}
\def\q{\mathbf{q}}
\def\LL{\mathbf{L}}
\def\D{\mathbf{D}}
\def\G{\mathcal{G}}
\def\U{\mathbf{U}}
\def\Q{\mathbf{Q}}
\def\LL{\mathbf{L}}
\def\FF{\mathbf{F}}
\def\R{\mathbf{R}}
\def\q{\mathbf{q}}
\def\G{{\cal G}}
\def\E{{\cal E}}
\def\VV{{\cal V}}
\def\LL{{\cal L}}
\def\Q{\mathbf{Q}}
\begin{document}

\title{Random Walk Graph Laplacian based Smoothness Prior for Soft Decoding of JPEG Images}

\author{Xianming Liu,~\IEEEmembership{Member,~IEEE},
Gene Cheung,~\IEEEmembership{Senior Member,~IEEE},
Xiaolin Wu,~\IEEEmembership{Fellow,~IEEE},
Debin Zhao,~\IEEEmembership{Member,~IEEE}
\thanks{

X. Liu is with the School of Computer Science and Technology, Harbin Institute of Technology, Harbin, 150001, P.R. China; and also with National Institute of Informatics, 2-1-2, Hitotsubashi, Chiyoda-ku, Tokyo, Japan 101--8430. e-mail: xmliu.hit@gmail.com.

G. Cheung is with National Institute of Informatics, 2-1-2, Hitotsubashi, Chiyoda-ku, Tokyo, Japan 101--8430. e-mail: cheung@nii.ac.jp.

X. Wu is with the Department of Electrical and Computer Engineering, McMaster University, Ontario, Canada, L8S 4K1. e-mail: xwu@ece.mcmaster.ca.

D. Zhao is with the School of Computer Science and Technology, Harbin Institute of Technology, Harbin, 150001, P.R. China. e-mail: dbzhao@hit.edu.cn.
}}

\maketitle

\begin{abstract}
Given the prevalence of JPEG compressed images, optimizing image reconstruction from the compressed format remains an important problem.
Instead of simply reconstructing a pixel block from the centers of indexed DCT coefficient quantization bins (hard decoding), soft decoding reconstructs a block by selecting appropriate coefficient values within the indexed bins with the help of signal priors.
The challenge thus lies in how to define suitable priors and apply them effectively.

In this paper, we combine three image priors---Laplacian prior for DCT coefficients, sparsity prior and graph-signal smoothness prior for image patches---to construct an efficient JPEG soft decoding algorithm.
Specifically, we first use the Laplacian prior to compute a minimum mean square error (MMSE) initial solution for each code block.
Next, we show that while the sparsity prior can reduce block artifacts, limiting the size of the over-complete dictionary (to lower computation) would lead to poor recovery of high DCT frequencies.
To alleviate this problem, we design a new graph-signal smoothness prior (desired signal has mainly low graph frequencies) based on the left eigenvectors of the random walk graph Laplacian matrix (LERaG).
Compared to previous graph-signal smoothness priors, LERaG has desirable image filtering properties with low computation overhead.
We demonstrate how LERaG can facilitate recovery of high DCT frequencies of a piecewise smooth (PWS) signal via an interpretation of low graph frequency components as relaxed solutions to normalized cut in spectral clustering.
Finally, we construct a soft decoding algorithm using the three signal priors with appropriate prior weights.
Experimental results show that our proposal outperforms state-of-the-art soft decoding algorithms in both objective and subjective evaluations noticeably.
\end{abstract}

\begin{keywords}
image restoration, sparse representation, graph signal processing
\end{keywords}

\IEEEpeerreviewmaketitle

\section{Introduction}
\label{sec:intro}
Millions of images are now captured and viewed daily on social networks and photo-sharing sites like Facebook and Flickr\footnote{It is estimated that 300 million photos are uploaded to Facebook a day.}.
The explosive volume increase of these images are outpacing the cost decrease of storage devices, and thus lossy image compression is still indispensable in today's visual communication systems.
The most prevalent compression format for images remains JPEG (Joint Photographic Experts Group)\footnote{https://jpeg.org/}: a lossy image compression standard whose first and most popular version was finalized more than two decades ago. JPEG is a block-based transform coding scheme, where an image is first divided into non-overlapping $8 \times 8$ pixel blocks, transformed via \textit{discrete cosine transform} (DCT) to coefficients, then lossily quantized and entropy coded.




Broadly speaking, there are two approaches to decoding a JPEG image.
Given encoded quantization bin indices of different DCT coefficients in a pixel block, \textit{hard decoding} chooses the bin centers as reconstructed coefficients and performs inverse DCT to recover the block's pixels.
It is thus inevitable that when the quantization bin sizes increase (coarser quantization), the resulting reconstruction quality will worsen correspondingly.

Instead, one can take a \textit{soft decoding} approach: each DCT coefficient is only constrained to be within the indexed quantization bin, and the reconstruction value is chosen with the help of signal priors.
This is the approach taken in many previous works \cite{avideh,SIAM,DicTV,Liu_2015_cvpr} and is the approach taken in this paper.
The challenge thus lies in identifying appropriate signal priors and incorporating them into an effective soft decoding algorithm.

\begin{figure}[t!]
\centering
\includegraphics[width = 0.45\textwidth]{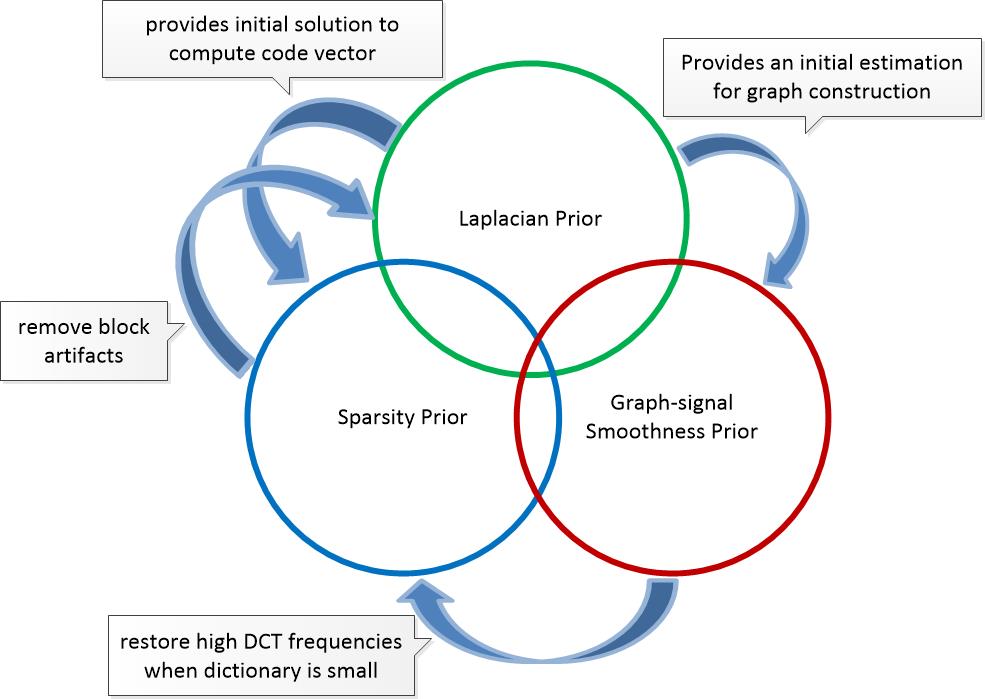}
\caption{Conceptual diagram shows how three image priors complement each other in soft decoding of JPEG images.}
\label{fig:relationship}
\end{figure}

In this paper, we combine three signal priors---Laplacian prior for DCT coefficients \cite{dctlaplacian}, sparsity prior and graph-signal smoothness prior for image patches \cite{sparseoverview,graphoverview}---to build an efficient soft decoding algorithm for JPEG images.
Specifically, for each $8 \times 8$ code block in an image, we first use the Laplacian prior---modeling the probability distributions of DCT coefficients as Laplacian---to compute an initial signal estimate in a closed form that minimizes the expected mean square error (MMSE).
However, optimizing non-overlapping blocks individually can lead to block artifacts.
Using a sparse signal model for a larger patch that straddles multiple $8 \times 8$ code blocks, one can remove block artifacts using an over-complete dictionary of atoms \cite{KSVD} trained offline from a large dataset of natural image patches.

The complexity of computing a code vector is high when the dictionary is large \cite{KSVD}.
Unfortunately, when one reduces the dictionary size, recovery of high DCT frequencies in image patches becomes poor.
We explain this phenomenon by drawing an analogy to low-rate vector quantizers (VQ) \cite{VC} when the DCT frequencies are modeled as Laplacian distributions.

To satisfactorily recover high DCT frequencies, we design a new graph-signal smoothness prior, where the key assumption is that the desired signal (pixel patch) contains mainly low graph frequencies.
Our graph-signal smoothness prior is constructed using \textbf{l}eft \textbf{e}igenvectors of the \textbf{ra}ndom walk \textbf{g}raph Laplacian matrix (LERaG); compared to previous graph-based priors \cite{jiahao,jiahao15,hu16spl}, LERaG has desirable image filtering properties with low computation overhead.
We demonstrate how LERaG can facilitate recovery of high DCT frequencies of piecewise smooth (PWS) signals via an interpretation of low graph frequency components as relaxed solutions to normalized cut in spectral clustering \cite{Ncut}---the closer a target signal is PWS, the more easily the pixels are divided into clusters, and the more likely LERaG can restore the signal.
Finally, we construct a soft decoding algorithm using the three signal priors with appropriate prior weights.
Experimental results show that our proposal outperforms state-of-the-art soft decoding algorithms in both objective and subjective evaluations noticeably.

The outline of the paper is as follows. We first review related works in Section~\ref{sec:related}.
We then discuss the three signal priors in order: the Laplacian prior, the sparsity prior and the graph-signal smoothness prior, in Section~\ref{sec:laplacian}, Section~\ref{sec:formulate} and Section~\ref{sec:graph}, respectively.
In Section~\ref{sec:optimization}, we combine the priors into a JPEG soft decoding algorithm.
Finally, we present results and conclusion in Section~\ref{sec:results} and \ref{sec:conclude}, respectively.

\section{Related Work}
\label{sec:related}

\subsection{Soft Decoding of JPEG Images}

There are two general approaches to reconstruct a compressed JPEG image in the literature: deblocking and dequantization.
The purpose of deblocking is to remove the block artifacts in a JPEG image.
By regarding compression noise as additive white Gaussian noise, deblocking works on decoded JPEG image directly, and recovers it in a similar way as denoising via pre-defined prior models, such as local smoothness \cite{Reeve,PDE}, Gaussian processes \cite{KwonPAMI}, sparsity \cite{Jung}, etc.
However, the non-linearity of quantization operations in JPEG makes quantization noises signal dependent, far from being white and independent.
Inaccurate modeling of compression artifacts limits the restoration performance of deblocking.
Dequantization, also called \emph{soft decoding}, treats JPEG image restoration as an ill-posed inverse problem: find the most probable transform coefficients in a code block subject to indexed quantization bin constraints, with the help of signal priors and optimization.
Soft decoding utilizes all available information such as indexed quantization bin boundaries and natural image priors, and hence has more potential to get better restoration performance. Soft decoding is also the approach taken in this paper. In the following, we review some popular and state-of-the-art methods in the literature.

Many dequantization methods exploit image priors in the pixel or transform domain to address the inverse problem, such as the classical \textit{projection on convex sets} (POCS) method \cite{avideh}, the total variation (TV) model \cite{SIAM}, and non-local self-similarity in DCT domain \cite{ZhangTIP}. Accompanying its success in other restoration applications, the sparsity prior has also shown its promise in combating compression distortions \cite{DicTV,D2SR,SSRQC}.
For instance, Chang \emph{et al.} \cite{DicTV} proposed to learn the dictionary from the input JPEG image, and use total variation and quantization constraint to further limit the solution space.
However, the dictionary trained from JPEG image itself will also learn noise patterns as atoms.
Therefore, in some cases, this method will enhance but not suppress the compressed artifacts.
Very recently, Zhao \emph{et al.} \cite{SSRQC} proposed to combine structural sparse representation prior with quantization constraint.
This scheme achieves state-of-the-art soft decoding performance. As depicted by the theoretical analysis in Section V-C, the sparsity prior cannot provide satisfactory high-frequency information preservation when the dictionary size is small.

\subsection{Graph Laplacian Regularizer}

Leveraging on recent advances in graph signal processing \cite{graphoverview,cmugraph}, graph Laplacain regularizer has shown its superiority  as a new prior in a wide range of inverse problems, such as denoising \cite{hu13,jiahao15}, super-resolution \cite{mao}, bit-depth enhancement \cite{pengfei} and deblurring \cite{PaymanTIP14}. The definition of graph Laplacian regularizer relies on the graph Laplacian matrix, which describes the underlying structure of the graph signal. Two definitions of graph Laplacian matrix are typically used: combinatorial Laplacian and normalized Laplacian.

Most of existing methods \cite{hu13,jiahao15,mao,pengfei} utilize combinatorial graph Laplacian to define the smoothness prior, which is real, symmetric and positive-semidefinite. Its spectrum carries a notion of frequency. However, its filtering strength depends on the degrees of the graph vertices. Several normalized graph Laplacian versions are proposed so that their filtering strength is independent of the vertex degrees. One popular option is to normalize each weight symmetrically \cite{graphoverview}. The symmetrically normalized graph Laplacian has similar properties as the combinatorial Laplacian, except that it does not have the DC component. Therefore, it cannot handle constant signals well. 
\cite{PaymanTIP14} proposed a doubly stochastic graph Laplacian, which is symmetric and contains the DC component. However, it requires non-trivial computation to identify transformation matrices to make the rows and columns of the Laplacian matrix stochastic.

In our previous work \cite{xmicip15}, we used a graph-signal smoothness prior based on combinatorial graph Laplacian for soft decoding of JPEG image. 
In this paper, we propose a new smoothness prior using left eigenvectors of the random walk graph Laplacian matrix to overcome the drawbacks of the combinatorial graph Laplacian. 
Further, we provide a thorough analysis from a spectral clustering perspective to explain why our proposed graph-signal smoothness prior can recover high DCT frequencies of piecewise smooth signals.

\section{Problem Formulation}
\label{sec:background}
\subsection{Quantization Bin Constraint}

We begin with the problem setup. In JPEG standard, each non-overlapping $8 \times 8$ pixel block (called \textit{code block} in the sequel) in an image is compressed independently via transform coding.
Specifically, each $8 \times 8$ code block in vector form $\mathbf{y} \in \mathbb{R}^{64}$ is transformed via DCT to 64 coefficients $\mathbf{Y} = \mathbf{T} \, \mathbf{y}$, where $\mathbf{T}$ is the transform matrix.
The $i$-th coefficient $Y_i$ is quantized using \textit{quantization parameter} (QP) $Q_i$---assigned a quantization bin (\textit{q-bin}) index $q_i \in \mathbb{I}$ (\textit{q-index}) as:
\begin{equation}
q_i = \mathtt{round}\left( {Y_i}/{Q_i}\right).
\end{equation}
Because natural images tend to be smooth, most AC coefficients are close or equal to zero.
JPEG also employs larger QPs for higher frequencies \cite{JPEG} due to human's lesser visual sensitivities to fast changing details, resulting in compact signal representation in the DCT domain for most code blocks.

At the decoder, having received only q-index $q_i$ there exists an uncertainty when recovering $Y_i$, in particular:
\begin{equation}
q_i Q_i \leq Y_i < (q_i+1) Q_i.
\label{eq:qbin}
\end{equation}
This quantization constraint defines the search space for $Y_i$ during restoration of the code block.

\subsection{Soft Decoding of JPEG Images}

A JPEG standard decoder \cite{JPEG} simply chooses the q-bin centers as the reconstructed coefficients and performs inverse DCT to recover the block's pixels.
Clearly, the restoration quality will worsen as the sizes of the q-bins increase, particular at high frequencies.
This approach is called \textit{hard decoding}.
Instead, our approach is to only constrain each DCT coefficient to be within the indexed q-bin using (\ref{eq:qbin}), and choose the ``optimal" reconstruction value using pre-defined signal priors.
This approach is called \textit{soft decoding}.

It is known that block artifacts appear as unnatural high frequencies across block boundaries at low rates \cite{avideh}.
To reduce block artifacts, we enforce consistency among adjacent blocks as follows.
As shown in Fig.~\ref{fig:scheme}, we define a larger \textit{patch} $\x \in \mathbb{R}^n$ that encloses a smaller code block $\y \in \mathbb{R}^{64}$, where $n > 64$.
Mathematically, $\mathbf{y} = \mathbf{M} \x$, where binary matrix $\mathbf{M} \in \{0, 1\}^{64 \times n}$ extracts pixels in $\x$ corresponding to the smaller code block $\mathbf{y}$.
$\x$ is the basic processing unit containing pixels from multiple patches, and thus when recovering $\x$, pixels across patch boundaries are enforced to be consistent by averaging.

\begin{figure}[t!]
\centering
\includegraphics[width = 0.35\textwidth]{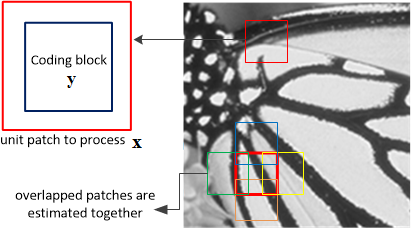}
\caption{A patch being optimized encloses a smaller code block. Pixels across block boundaries are enforced to be consistent by averaging to remove discontinuities.}
\label{fig:scheme}
\end{figure}

The ill-posed nature of soft decoding means that signal prior(s) $p(\mathbf{x})$ is needed to differentiate among good candidate solutions $\mathbf{x}$'s.
We first denote by $\q$ the vector of q-indices corresponding to block $\y$ surrounded by $\x$ (\textit{i.e.}, $\mathbf{y} = \mathbf{M} \x$) and $\Q$ the vector of corresponding QPs.
Having defined $p(\x)$, we pursue a \textit{maximum a posterior} (MAP) formulation and seek the signal $\x$ with the largest posterior probability $p(\x \mid \q)$.
Mathematically, MAP is formulated as:
\begin{equation}
\label{posterior}
\begin{array}{l}
\x^* = \mathop {\arg \max }\limits_{\x}{p\left( {\x \mid \q} \right)} \\
\hspace{0.5cm}= \mathop {\arg \max }\limits_{\x} {p\left( {\q \mid  \x} \right)}
\; p(\x).
 \end{array}
\end{equation}
where $p\left( {\q \mid  \x} \right)$ denotes the likelihood of observing q-indices $\q$ given $\x$.
$p\left( {\q \mid  \x} \right)$ can be written as:
\begin{equation}
\label{eq:likelihood}
p(\q \mid \x) = \left\{ \begin{array}{l}
1 \hspace{0.3cm} \mbox{if} \hspace{0.1cm} \mathtt{round}(\TT\M\x/\Q) = \q \\
0 \hspace{0.3cm} \mbox{o.w.}
\end{array} \right.
\end{equation}
where $\TT\M\x/\Q$ here means element-by-element division.
Thus, the MAP formulation (\ref{posterior}) becomes:
\begin{equation}
\label{obj1}
\begin{array}{l}
\x^* = \mathop {\arg \max }\limits_{\x}{p(\x)}. \\
\hspace{-0.3cm} \mbox{s.t.} \hspace{0.3cm} \q\Q \preceq \mathbf{T}\M\x \prec (\q+1)\Q
 \end{array}
\end{equation}

\section{The Laplacian Prior}
\label{sec:laplacian}
Given that individual DCT coefficients of feasible solutions are constrained to be inside indexed q-bins (\ref{eq:qbin}), one natural choice for signal prior is the Laplacian prior, which is also defined for individual DCT coefficients. Specifically, the Laplacian prior \cite{dctlaplacian} states that the probability density function of a DCT coefficient $Y_i$ is:
\begin{equation}
P_L(Y_i) = \frac{\mu_i}{2}\exp ( - \mu_i \left| Y_i \right|),
\label{eq:uncertain}
\end{equation}
where $\mu_i$ is a parameter. \cite{dctlaplacian} show that the higher the frequency, the sharper the Laplacian distribution (larger $\mu_i$).

Using the Laplacian prior alone, given q-bin constraints we can compute a \textit{minimum mean square error} (MMSE) solution for soft decoding.
In particular, each optimal DCT coefficient $Y_i^*$ of a code block $\y$ in a MMSE sense can be computed independently as:
\begin{equation}
\label{eq:MMSE}
Y_i^* = \arg \min \limits_{{Y_i^o}}
\int_{q_i Q_i}^{(q_i+1)Q_i}
({Y_i^o} - Y_i)^2 \, P_L(Y_i)
~ d Y_i.
\end{equation}
By taking the derivative of (\ref{eq:MMSE}) with respect to $Y_i^*$ and setting it to zero, we obtain a closed-form solution:
\begin{equation}
\label{eq:MMSE-solution}
\small
{Y_i^*} = \frac{{\left( {q_i Q_i} + \mu_i \right)e^{ \left\{ \frac{-q_i Q_i}{\mu_i} \right\}} - \left( {(q_i+1)Q_i} + \mu_i \right) e^{\left\{  \frac{-(q_i+1)Q_i}{\mu_i} \right\}}}}{{e^{ \left\{ \frac{-q_i Q_i}{\mu_i} \right\}} - e^{\left\{  -\frac{(q_i+1)Q_i}{\mu_i} \right\}}}}.
\end{equation}
The recovered code block $\y^*$ can be obtained by performing inverse DCT on the estimated coefficients $\{Y_i^*\}$. After performing MMSE estimation for all code blocks, we get a soft decoded image.

A clear advantage of Laplacian-based soft decoding is that there is a closed-form MMSE solution computed efficiently, which by definition has a smaller expected squared error than a MAP solution, which is just the most probable solution \cite{wan16}.
The closed form is possible because the Laplacian prior describes distributions of individual coefficients, which is also how the q-bin constraints are specified in (\ref{eq:qbin}).
However, the MMSE solution (\ref{eq:MMSE-solution}) can only be used to recover code blocks $\y$ separately, and thus cannot handle block artifacts that occur across adjacent blocks.
As such, we next propose to employ the sparsity prior to provide additional \textit{a priori} signal information and optimize at a larger patch level $\x$.

\section{The Sparsity Prior}
\label{sec:formulate}

Given that it is difficult to use the Laplacian prior directly to recover a larger patch $\x$, we first formulate a MAP problem for a patch using the sparsity prior.
Then, using again the Laplacian prior we analyze the K-SVD based dictionary learning, showing that when the dictionary is small, the atoms tend to have low average DCT frequency.

\subsection{Sparse Signal Model}

The sparsity prior \cite{KSVD} states that a signal $\x \in \mathbb{R}^n$ can be well approximated by a sparse linear combination of atoms $\{ \phi_i \}$ from an appropriately chosen over-complete dictionary $\bm{\Phi} \in \mathbb{R}^{n \times M}$, where the dictionary size is much larger than the signal dimension, \textit{i.e.}, $M \gg n$.
Mathematically we write:
\begin{equation}
\label{eq:sparse}
{\x} = \bm{\Phi}\a + \bm{\xi},
\end{equation}
where \textit{code vector} $\a \in \mathbb{R}^{M}$ contains the coefficients corresponding to atoms $\{ \phi_i \}$ in $\bm{\Phi}$, and $\bm{\xi} \in \mathbb{R}^n$ is a small error term. $\a$ is assumed to be sparse; \textit{i.e.}, the number of non-zero entries in $\a$ is small.
Dictionary $\bm{\Phi}$ can be learned offline from a large set of training patches using K-SVD \cite{KSVD}.

Given signal $\x$, an optimal $\a$ can be found via \cite{KSVD}:
\begin{equation}
\label{eq:L0}
\a^*=\mathop {\arg \min }\limits_{\a} \left\| {{\x} - {\bm{\Phi}}{\a}} \right\|_2^2 + {\lambda\left\| {{\a}} \right\|_0},
\end{equation}
where $\left\| \a \right\|_0$ is the $l_0$-norm of $\a$; $\lambda$ is a parameter trading off the fidelity term with the sparsity prior.
(\ref{eq:L0}) is NP-hard, but the matching and basis pursuit algorithms have been shown effective in finding approximated solutions \cite{KSVD,MatchingPursuits,BasisPursuit}.
For instance, in \cite{KSVD}, the \textit{orthogonal matching pursuit} (OMP) algorithm is employed, which greedily identifies the atom with the highest correlation to the current signal residual at each iteration.
Once the atom is selected, the signal is orthogonally projected to the span of the selected atoms, the signal residual is recomputed, and the process repeats.

We can write the sparsity prior as a probability function:
\begin{equation}
\label{eq:l0}
P_S(\x) \propto \exp ( - \lambda\left\| \a \right\|_0).
\end{equation}

\subsection{Sparsity-based Soft Decoding}

Given the prior probability in (\ref{eq:l0}), we can rewrite the optimization problem in (\ref{obj1}) as follows:
\begin{equation}
\label{eq:obj0}
\begin{array}{l}
\min \limits_{\{\x,\a\}}\left\|\x - \bm{\Phi}\a \right\|_2^2 + \lambda \left\|\a \right\|_0,\\
\mbox{s.t.}~ \q\Q \preceq \mathbf{T}\M\x \prec (\q+1)\Q
\end{array}
\end{equation}
Compared to (\ref{eq:L0}), there is an additional quantization constraint, which is specific to the soft decoding problem.

In (\ref{eq:obj0}), both signal $\x$ and code vector $\a$ are unknown.
We can solve the problem alternately by fixing one variable and solving for the other, then repeat until convergence:

\begin{itemize}

 \item \emph{Step 1--Initial Estimation}:  Initialize $\x^{(0)}$. For example, $\x^{(0)}$ is initialized as the closed-form solution image using the Laplacian prior described in Section \ref{sec:laplacian}.

  \item \emph{Step 2--Sparse Decomposition}: Given $\x^{(t)}$ of iteration $t$, compute the corresponding code vector $\a^{(t)}$ by solving the following minimization problem:
 \begin{equation}
\label{eq:dictionarlearning}
\begin{array}{l}
\a^{(t)} = \arg \min \limits_{\a} \left\| {{\x}^{(t)} - {\bm{\Phi}}{\a}} \right\|_2^2 + \lambda \left\|\a \right\|_0,
\end{array}
\end{equation}
using OMP\footnote{In the unlikely case that there exist multiple $\a$'s that yield  the same minimum objective, we assume that the algorithm returns deterministically the ``first" solution (\textit{e.g.}, one with non-zero coefficients of the smallest atom indices) so the solution to (\ref{eq:dictionarlearning}) is always unique.} stated earlier.


\item \emph{Step 3--Quantization Constraint}: Given sparse code $\a^{(t)}$, optimize $\x^{(t+1)}$ that satisfies the q-bin constraint:
    \begin{equation}
\label{eq:obj_qs}
\begin{array}{l}
\x^{(t+1)} = \arg \min \limits_{\x}\left\|\x - \bm{\Phi}\a^{(t)} \right\|_2^2,\\
\mbox{s.t.}~ \q\Q \preceq \mathbf{T}\M\x \prec (\q+1)\Q
\end{array}
\end{equation}
which can be solved via quadratic programming \cite{Nocedal06}.
\end{itemize}

Step 2 and Step 3 are repeated until $\a$ converges.
In the above procedure, the computational burden mainly lies in the sparse code search step, where the complexity of OMP increases linearly with the number $M$ of dictionary atoms.

We prove local convergence of our sparsity-based soft decoding algorithm via the following lemma.

\begin{lem}\emph{The sparsity-based soft decoding algorithm converges to a local minimum.}
\end{lem}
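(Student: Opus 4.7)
The plan is to view the iteration as block coordinate descent on the joint objective
\[
F(\x,\a) \;=\; \|\x - \bm{\Phi}\a\|_2^2 + \lambda\|\a\|_0
\]
over the feasible set $\Omega \times \mathbb{R}^M$, where $\Omega = \{\x : \q\Q \preceq \TT\M\x \prec (\q+1)\Q\}$ is the q-bin box. The proof has two parts: (i) show that the sequence $\{F(\x^{(t)},\a^{(t)})\}$ is monotonically non-increasing and bounded below, so that it converges; (ii) argue that any limit point of the iterates satisfies block-wise optimality conditions that, on a support-stable neighbourhood, upgrade to a local minimum of $F$.

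For monotonicity, I would examine the two steps separately. Step 2 defines $\a^{(t)}$ as a minimizer of $F(\x^{(t)},\cdot)$, so $F(\x^{(t)},\a^{(t)}) \le F(\x^{(t)},\a^{(t-1)})$. Because OMP only approximately solves (\ref{eq:dictionarlearning}), I would warm-start it at $\a^{(t-1)}$ and retain $\a^{(t-1)}$ whenever the OMP output fails to strictly decrease $F(\x^{(t)},\cdot)$, which enforces the inequality unconditionally. Step 3 orthogonally projects $\bm{\Phi}\a^{(t)}$ onto the closed box $\Omega$, so $\|\x^{(t+1)}-\bm{\Phi}\a^{(t)}\|_2^2 \le \|\x^{(t)}-\bm{\Phi}\a^{(t)}\|_2^2$, and since $\lambda\|\a^{(t)}\|_0$ is constant in $\x$, this gives $F(\x^{(t+1)},\a^{(t)}) \le F(\x^{(t)},\a^{(t)})$. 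Chaining the two inequalities yields the required monotonicity; since $F \ge 0$, the sequence is bounded below and therefore converges to some $F^\star \ge 0$.

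To promote this to convergence to a local minimum, I would appeal to compactness: $\Omega$ is a closed bounded box, and the bound $F(\x^{(t)},\a^{(t)}) \le F(\x^{(0)},\a^{(0)})$ forces $\|\bm{\Phi}\a^{(t)}\|_2$ and hence the active coefficients of $\a^{(t)}$ to be bounded. Because $\|\a\|_0$ takes only finitely many integer values, a Bolzano--Weierstrass/pigeonhole argument produces a subsequence $(\x^{(t_k)},\a^{(t_k)}) \to (\x^\star,\a^\star)$ along which the support of $\a^{(t_k)}$ is eventually a fixed set $S$. Passing to the limit in the two block-minimization updates gives $F(\x^\star,\a^\star) \le F(\x^\star,\a)$ for all $\a$ supported in $S$, and $F(\x^\star,\a^\star) \le F(\x,\a^\star)$ for all $\x \in \Omega$. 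On the neighbourhood where the support does not change, the $\ell_0$ term is locally constant, so $F$ reduces to a convex quadratic in $\a$ and a convex quadratic in $\x$ over the convex box $\Omega$; the two block-optimality conditions on this product neighbourhood therefore certify $(\x^\star,\a^\star)$ as a local minimum of $F$.

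The main obstacle is the interaction between the non-convex $\ell_0$ term and the greedy heuristic nature of OMP: the $\ell_0$ penalty rules out any global convergence claim, and OMP is not guaranteed to exactly solve (\ref{eq:dictionarlearning}), so the descent property required at Step 2 is not automatic. The warm-start/monitor rule described above sidesteps the OMP suboptimality by forcing strict descent at every outer iteration. The more delicate point is distinguishing a block-coordinate stationary point from a genuine local minimum, since in general the former is weaker; the support-stabilization step is what bridges this gap, because once the active set freezes the local problem is biconvex on a product convex set and block-wise optimality coincides with local optimality, which is where the bulk of the technical care in the proof must sit.
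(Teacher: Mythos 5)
Your first paragraph is essentially the paper's entire proof: the authors observe that Steps 2 and 3 each minimize the objective of (\ref{eq:obj0}) in one block with the other held fixed, so the objective is monotonically non-increasing, and since it is bounded below by $0$ it converges; they then conclude ``hence the algorithm converges to a local minimum.'' Where you genuinely diverge is in recognizing that this descent argument only shows convergence of the \emph{objective values}, not that the iterates converge, let alone to a local minimum of $F$, and in recognizing that OMP does not exactly solve (\ref{eq:dictionarlearning}) so the Step-2 descent inequality is not automatic. The paper handles the latter only by a footnote forcing OMP's output to be deterministic (so that the contradiction argument ``if $\a^{(t)}\neq\a^{(t-1)}$ the objective strictly decreases'' makes sense), whereas you handle it by slightly modifying the algorithm (warm start plus a rejection rule), which is cleaner but proves the lemma for a variant of the stated procedure. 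Your support-stabilization and limit-point analysis is material the paper does not attempt at all, and it is what actually bridges the gap between ``objective converges'' and ``converges to a local minimum.''

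Two small cautions on that extra material. First, your closing claim that block-wise optimality coincides with local optimality ``because the local problem is biconvex on a product convex set'' is not right as stated: partial optima of biconvex functions need not be local minima (\emph{e.g.}, $x^2+y^2-4xy$ at the origin). What saves you here is that $\|\x-\bm{\Phi}_S\a_S\|_2^2$ is \emph{jointly} convex in $(\x,\a_S)$, being a convex function of a linear map, and for a jointly convex smooth function over a product of convex sets the block-wise first-order conditions do sum to the global optimality condition; you should invoke joint convexity, not biconvexity. Second, $\Omega$ constrains only the $64$ DCT coefficients of $\TT\M\x$ (and only via half-open intervals), so it is neither closed nor bounded as a subset of $\mathbb{R}^n$; the projection/descent inequality in Step 3 survives because $\x^{(t)}$ is feasible, but the compactness step needs the boundedness of the iterates to be argued from the descent bound rather than from $\Omega$ itself.
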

\begin{proof}
Steps 2 and 3, examining variables $\a^{(t)}$ and $\x^{(t+1)}$ separately, are both minimizing objective in (\ref{eq:obj0}) that is lower-bounded by 0.
At step 2, given $\x^{(t)}$ that was optimized assuming a fixed previous $\a^{(t-1)}$, the algorithm finds $\a^{(t)}$. If $\a^{(t)} \neq \a^{(t-1)}$, then the objective must be smaller; otherwise, $\a^{(t)}$ is not the objective-minimizing argument, a contradiction.
The same is true for step 3.
Hence the objective is monotonically decreasing when the variables are updated to different values.
Given that the objective is lower-bounded by $0$, the objective cannot decrease indefinitely, and hence the algorithm converges to a local minimum.
\end{proof}

\subsection{Limitation of Small K-SVD Trained Dictionary}

\begin{figure}[t!]
\centering
\includegraphics[width = 0.45\textwidth]{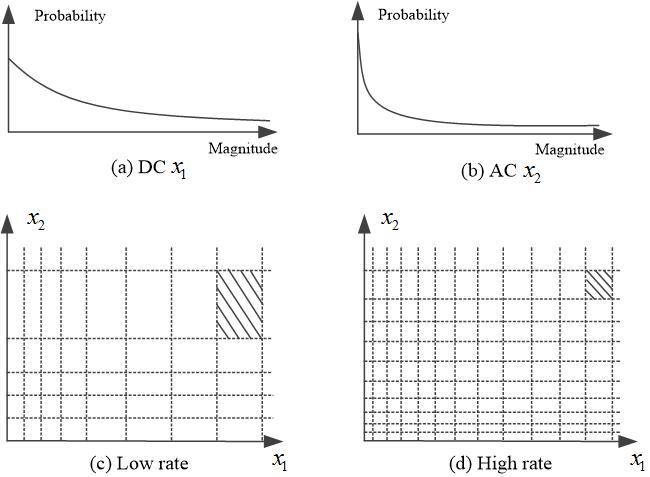}
\caption{Illustration of product VQ for two frequencies: $x_1$ (DC) and $x_2$ (AC).
(a) and (b) show probability distributions of $x_1$ and $x_2$ respectively.
When the dictionary is small (low rate), (c) shows that quantization is coarser for large magnitude in $x_2$ than $x_1$, since the probability of large magnitude in high frequency is relatively low.
When the dictionary is large (high rate), (d) shows that there are enough quantization bins so that quantization for large magnitude in high frequency is sufficiently fine. }
\label{fig:LM}
\vspace{-0.2cm}
\end{figure}

We now argue that when the size of a dictionary trained by K-SVD \cite{KSVD} is small (to reduce complexity during code vector search via OMP), the recovery of high DCT frequencies suffers.
Given a set of $N$ training patches $\{\x_i\}_{i=1}^N$ each of size $\sqrt n  \times \sqrt n$, dictionary $\bm{\Phi} = \{\phi_i\} \in \mathbb{R}^{n \times M}$ of $M$ atoms, is trained using K-SVD via the following formulation:
\begin{equation}
\mathop {\min }\limits_{{\bf{\Phi}} ,\{ {\a_i}\} } \sum\limits_{i = 1}^N {\left\| \x_i - {\bf{\Phi}}\a_i \right\|_2^2}  + \lambda {\left\| \a_i  \right\|_0},
\label{eq:dictTrain}
\end{equation}
The variables are \textit{both} $\bm{\Phi}$ \textit{and} $\{ \a_i \}$, which are solved alternately by fixing one and solving for the other.

We write (\ref{eq:dictTrain}) in the DCT frequency domain using the Parsavel's theorem \cite{RayleighEnergy}:
\begin{equation}
\label{eq:eq_freq}
\mathop { \min }\limits_{{\bf{\Phi}} ,\{ {\a_i}\} } \sum\limits_{i = 1}^N {\left\| \X_i - \mathbf{T}' {\bf{\Phi}}\a_i \right\|_2^2} + \lambda {\left\| \a_i  \right\|_0},
\end{equation}
where $\mathbf{T}'$ is the DCT transform for the $n$-pixel patch, and $\X_i = \mathbf{T}' \x_i$ is the vector of DCT coefficients for $\x_i$.

We rewrite the optimization to its constrained form:
\begin{equation}
\mathop { \min }\limits_{{\bf{\Phi}} ,\{ {\a_i}\} } \sum\limits_{i = 1}^N {\left\| \X_i - \mathbf{T}' {\bf{\Phi}}\a_i \right\|_2^2},\hspace{0.2cm}\mbox{s.t.},\hspace{0.2cm} \left\| \a_i  \right\|_0 \le K
\end{equation}
where $K$ is a pre-set sparsity limit for each $\a_i$.

For intuition, we focus on the special case when $K=1$.
In this special case, the dictionary learning problem is analogous to the \textit{vector quantization} (VQ) design problem \cite{VC}.
Specifically, selecting $M$ atoms in dictionary $\bf{\Phi}$ is analogous to designing $M$ partition $\R_1, \ldots, \R_M$ so that their union is the space of feasible signal $\R$, \textit{i.e.,} $\R = \cup_{m=1}^M \R_m$, and no two partitions overlap, \textit{i.e.} $\R_i \cap \R_j = \emptyset$, $\forall i \neq j$.
The \textit{centroid} of each cell $\R_m$ is atom $\phi_m$.

When the number of training samples $N$ tends to infinity, the sum over $N$ training patches $\X_i$ is replaced by integration over $\X$ each with probability $P(\X)$:
\begin{equation}
\label{eq:eq_freq1}
\mathop {\min }\limits_{\{\phi_m\}} \sum\limits_{m = 1}^M \int\nolimits_{{\R_m}}{\left\| \X - \mathbf{T}'\phi_m \right\|_2^2}P(\X)d\X,
\end{equation}
where a signal $\X$ in partition $\R_m$ (DCT frequency domain) is represented by atom $\phi_m$ (in pixel domain), thus distortion $\| \X - \mathbf{T}'\phi_m\|_2^2$.

We have assumed in Section \ref{sec:laplacian} that $P(\X)$  is a product of Laplacian distributions for individual DCT frequencies, where for high frequencies the distributions are more skewed and concentrated around zero.
Thus, as the number $M$ of atoms $\phi_m$ in dictionary $\bf{\Phi}$ (hence the number of partitions $\R_m$) decreases, in order to minimize the expected squared error in (\ref{eq:eq_freq1}), quantization bins will be relatively coarse for large magnitude of high frequencies, because they are less probable in comparison \cite{swaszek91}.
See Fig.~\ref{fig:LM} for an example that shows coarse quantization bins for $x_2$ for a product VQ of two dimensions.
This explains why when the dictionary $\bf{\Phi}$ is small, recovery of large magnitude high DCT frequencies is difficult.

\begin{figure}[t!]
\centering
\includegraphics[width = 0.5\textwidth]{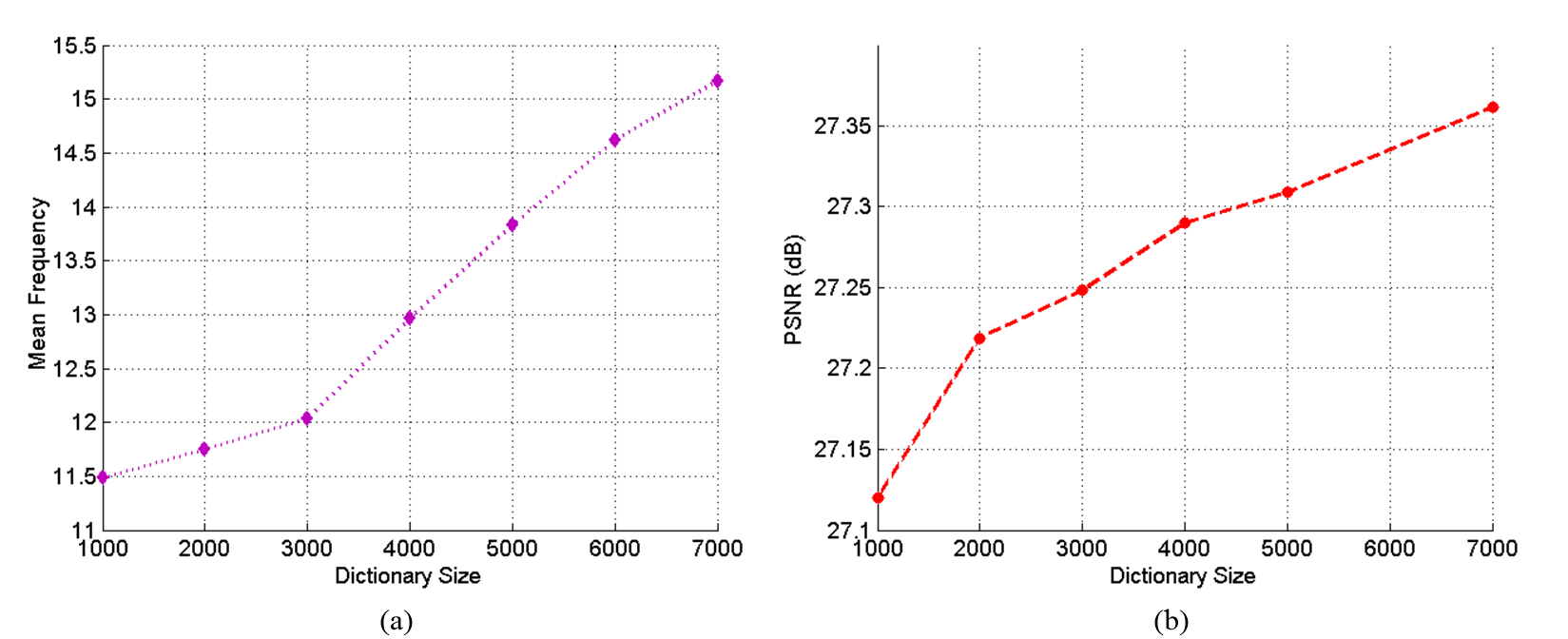}
\caption{(a) The relationship between the dictionary size and the high-frequency information preservation, where the mean frequency is used to measure the high-frequency information contained in atoms of dictionary. (b) The relationship between the dictionary size and the restoration performance, where we use the test image \emph{Butterfly} when QF = 10 as an example. }
\label{fig:trend}
\end{figure}

\begin{figure}[t!]
\centering
\includegraphics[width = 0.45\textwidth]{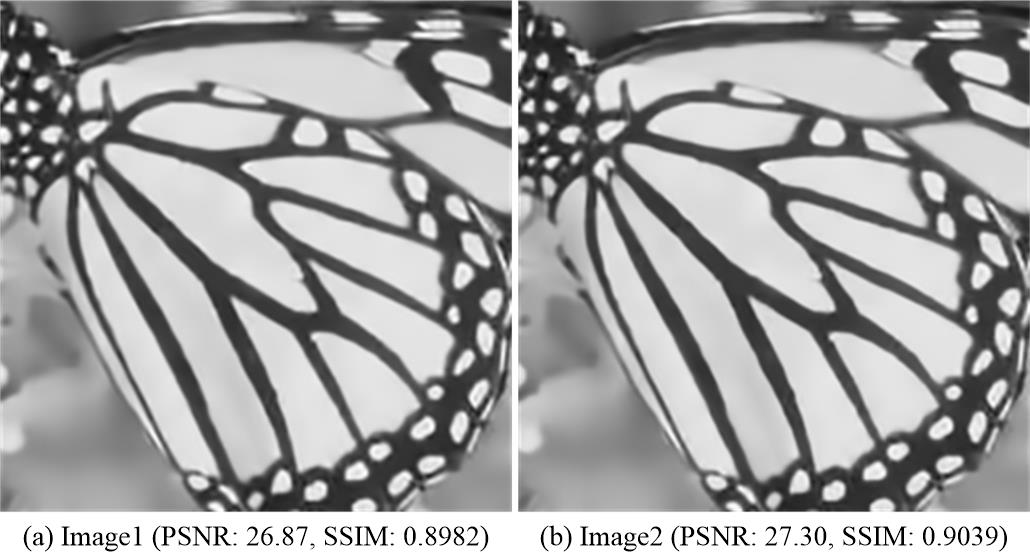}
\caption{Illustration of reconstructed images using dictionaries with different sizes. (a) Image1: the reconstructed image using dictionaries with size 1000. (b) Image2: the reconstructed image using dictionaries with size 7000. (c) The difference between two reconstructed images. The quality improvement of Image2 over Image1 mainly lies on edge structures. }
\label{fig:diff}
\vspace{-0.2cm}
\end{figure}

We can observe empirically this is indeed the case.
For a given K-SVD trained dictionary $\bf{\Phi}$, we compute the \textit{mean frequency} (MF) of atoms as follows.
Let $f_i$ and $Y_i(m)$ be a DCT frequency and an atom $\phi_m$'s corresponding coefficient at that frequency.
MF for $\bf{\Phi}$ is computed as:
\begin{equation}
MF = \frac{1}{M}\sum\limits_{m = 1}^M {\sum\limits_{i = 1}^n {{f_i}Y_i^2(m)} }.
\end{equation}

We computed MF for dictionaries of sizes ranging from $1000$ to $7000$ (the dimension of an atom is $100$ if $\x$ is set to $10 \times 10$).
As illustrated in Fig.~\ref{fig:trend} (a), MF is roughly linearly increasing with dictionary size.
Further, as illustrated in Fig.~\ref{fig:trend} (b), the performance of sparsity-based soft decoding also improves with dictionary size.
As shown in Fig.~\ref{fig:diff}, we can observe that using a large dictionary can significantly improve PSNR and SSIM values of the restored image, where the improvement over a smaller dictionary mainly lies in edge structures.

\section{The Graph-Signal Smoothness Prior}
\label{sec:graph}
Given that the sparsity prior cannot recover high DCT frequencies when QP is large and the dictionary is  small, we introduce the graph-signal smoothness prior, which was shown to restore piecewise smooth (PWS) signals well \cite{jiahao,jiahao15,hu16spl}.
We first define a discrete signal on a graph, then define a conventional smoothness notion for graph-signals and interpret graph-signal smoothness in the graph frequency domain.
Finally, we propose a new graph-signal smoothness prior based on the random walk graph Laplacian for soft decoding.

\subsection{Signal on Graphs}

We define a graph as $\mathcal{G} = \left(\VV, \E, \W\right)$.
$\VV$ and $\E$ are respectively the sets of vertices and edges in $\mathcal{G}$.
$W_{i,j}$ in the adjacency matrix $\mathbf{W}$ is the weight of the edge connecting vertices $i$ and $j$.
We consider only undirected graphs and non-negative edge weights \cite{graphoverview}; \textit{i.e.}, $W_{i,j} = W_{j.i}$ and $W_{i,j} \geq 0$.

A graph-signal $\x \in \mathbb{R}^n$ is a collection of discrete samples on vertices $\VV$ of a graph $\mathcal{G}$, where $n = |\VV|$.
In our work, for each target pixel patch we construct a fully-connected graph, where each vertex (pixel) is connected to all other vertices in $\mathcal{G}$.
The edge weight $W_{i,j}$ between two vertices $i$ and $j$ is conventionally computed using Gaussian kernels \cite{graphoverview}:
\begin{equation}
W_{i,j} = \exp \left( - \frac{\left\| x_i - x_j \right\|_2^2 }{\sigma_1^2} \right) \;
\exp \left( - \frac{\left\| l_i - l_j \right\|_2^2 }{\sigma_2^2} \right)
\label{eq:weight}
\end{equation}
where $x_i$ and $l_i$ are the intensity and location of pixel $i$, and $\sigma_1$ and $\sigma_2$ are chosen parameters.
This definition is similar to domain and range filtering weights in bilateral filter \cite{BilateralFilter}.


\subsection{Graph Laplacians and Signal Smoothness}

A \textit{degree matrix} $\mathbf{D}$ is a diagonal matrix where $D_{i,i} = \sum_j W_{i,j}$.
A \textit{combinatorial graph Laplacian} $\mathbf{L}$ is \cite{graphoverview}:
\begin{equation}
\mathbf{L} = \mathbf{D} - \mathbf{W}.
\end{equation}
There exist two normalized variants of $\mathbf{L}$: \textit{normalized graph Laplacian} $\mathcal{L}_n$ and \textit{random walk graph Laplacian} $\mathcal{L}_r$,
\begin{equation}
\label{eq:normL}
\mathcal{L}_n = \mathbf{D}^{-1/2}\mathbf{L}\mathbf{D}^{-1/2}, ~~~
\mathcal{L}_r = \mathbf{D}^{-1} \mathbf{L}.
\end{equation}
A normalized version means that its filtering strength does not depend on the degrees of the graph vertices.

Given $\mathbf{L}$, one can describe the squared variations of the signal $\x$ with respect to $\mathcal{G}$ using the \textit{graph Laplacian regularizer} $\x^T \mathbf{L} \x$ \cite{jiahao,jiahao15}:
\begin{equation}
\label{eq:GL}
\x^T\mathbf{L}\x = \frac{1}{2} \sum_{(i,j) \in \mathcal{E}}
{\left( x_i - x_j \right)}^2 {W_{i,j}} .
\end{equation}

A common graph-signal smoothness prior can then be defined as:
\begin{equation}
\label{eq:graphprior}
\begin{array}{l}
P_G(\x) \propto \exp \left( { - \lambda_2{\x^T} \mathbf{L} \x} \right),
\end{array}
\end{equation}
which states that a probable signal $\x$ has small $\x^T\mathbf{L}\x$.
By (\ref{eq:GL}), $\x^T\mathbf{L}\x$ is small if $\x$ has similar values at each vertex pair $(i,j)$ connected by an edge, \textit{or} the edge weight $W_{i,j}$ is small.
Thus $\x$ is smooth if its variations across connected vertices are consistent with the underlying graph (where edge weights reflect expected signal variations).
For example, if a discontinuity is expected at pixel pair $(x_i, x_j)$, one can pre-set a small weight $W_{i,j}$, so that the smoothness prior will not over-smooth during optimization.
Later, we will propose a different smoothness prior than the conventional (\ref{eq:graphprior}) with more desirable filtering properties.


\subsection{Frequency Interpretation of Graph Smoothness Prior}

Since $\mathbf{L}$ is a real, symmetric, positive semi-definite matrix, it can be decomposed into a set of orthogonal eigenvectors, denoted by $\{\u_i\}_{i=1,\cdots, n}$, with real non-negative eigenvalues $0 = \eta_1 \leq \eta_2 \le \cdots \le \eta_{n}$.
We define $\U$ as the eigen-matrix with $\u_i$'s as columns and $\bm{\Lambda}$ as the diagonal matrix with $\eta_i$'s on its diagonal.
$\mathbf{L}$ can thus be written as:
\begin{equation}
\label{eq:Ldec}
\begin{array}{l}
\mathbf{L} = \U\bm{\Lambda}\U^T.
\end{array}
\end{equation}
We define the \textit{graph Fourier transform} (GFT) matrix as $\FF = \U^T$. A graph-signal $\x$ can be transformed to the graph frequency domain via:
\begin{equation}
\label{eq:GFT}
\begin{array}{l}
\boldsymbol{\alpha} = \FF\x.
\end{array}
\end{equation}
With this definition, the graph Laplacian regularizer can be written as:
\begin{equation}
\label{eq:graphfre}
\begin{array}{l}
\x^T\mathbf{L}\x =  \boldsymbol{\alpha}^T\bm{\Lambda}\boldsymbol{\alpha}
= \sum\limits_{k}{{\eta_k} \, \alpha_k^2}.
\end{array}
\end{equation}
$\x^T\mathbf{L}\x$ can thus be written as a sum of squared GFT coefficients $\alpha_k^2$ each scaled by the eigenvalue $\eta_k$, which is interpreted as frequency in graph transform domain.

By maximizing the graph-signal smoothness prior in (\ref{eq:graphprior}) (minimizing $\x^T\mathbf{L}\x$), signal $\x$ is smoothened with respect to the graph, \textit{i.e.}, high graph frequencies are suppressed while low graph frequencies are preserved.
It is shown \cite{jiahao,jiahao15,hu16spl} that PWS signals---with discontinuities inside signals that translate to high DCT frequencies---can be well approximated as linear combinations of low graph frequencies for appropriately constructed graphs.
We provide an explanation next, which leads naturally to the derivation of a new smoothness prior.

\subsection{Building A Random Walk Graph Laplacian Prior}

We first show that low graph frequencies for the normalized graph Laplacian $\mathcal{L}_n$ can be interpreted as relaxed solutions for spectral clustering, which are PWS in general.
We then argue that to induce desirable filtering properties for $\mathcal{L}_n$, a similarity transformation is required, resulting in the random walk Laplacian $\mathcal{L}_r$.
Finally, we show that a more appropriate smoothness prior than (\ref{eq:graphprior}) can be defined using $\mathcal{L}_r$, with many desirable filtering properties.

\subsubsection{Spectral Clustering}

To understand low graph frequencies for $\mathcal{L}_n$ computed from a PWS signal $\mathbf{x}$, we take a \textit{spectral clustering} perspective.
Spectral clustering \cite{spectralclustering} is the problem of separating vertices $\mathcal{N}$ in a similarity graph $\mathcal{G}=(\mathcal{N}, \mathcal{E}, \mathbf{W})$ into two subsets of roughly the same size via spectral graph analysis.
Specifically, the authors in a landmark paper \cite{Ncut} proposed a minimization objective \textit{normalized cut} (Ncut) to divide $\mathcal{N}$ into subsets $\mathcal{A}$ and $\mathcal{B}$:
\begin{equation}
\min_{\mathcal{A}, \mathcal{B}} ~
\mathrm{Ncut}(\mathcal{A}, \mathcal{B}) := \mathrm{cut}(\mathcal{A}, \mathcal{B})
\left( \frac{1}{\mathrm{vol}(\mathcal{A})} + \frac{1}{\mathrm{vol}(\mathcal{B})} \right),
\label{eq:ncut}
\end{equation}
where $\mathrm{cut}(\mathcal{A}, \mathcal{B})$ counts the edges across the two subsets $\mathcal{A}$ and $\mathcal{B}$, and $\mathrm{vol}(\mathcal{A})$ counts the degrees of vertices in $\mathcal{A}$:
\begin{equation}
\mathrm{cut}(\mathcal{A},\mathcal{B}) =
\sum\limits_{i \in \mathcal{A}, j \in \mathcal{B}} {{W_{i,j}}},~~
\mathrm{vol}(\mathcal{A}) =  \sum\limits_{i \in \mathcal{A}} D_{i,i}.
\end{equation}
From (\ref{eq:ncut}), it is clear that a good division of $\mathcal{N}$ into similar clusters $\mathcal{A}$ and $\mathcal{B}$ with small $\mathrm{Ncut}(\mathcal{A}, \mathcal{B})$ will have small $\mathrm{cut}(\mathcal{A}, \mathcal{B})$ and large $\mathrm{vol}(\mathcal{A})$ and $\mathrm{vol}(\mathcal{B})$.
However, minimizing Ncut over all possible $\mathcal{A}$ and $\mathcal{B}$ that divide $\mathcal{N}$ is NP-hard.

Towards an approximation, the authors first rewrote the minimization of Ncut as:
\begin{equation}
\min_{\mathcal{A}, \mathcal{B}} ~ \mathrm{Ncut}(\mathcal{A}, \mathcal{B}) =
\min_{\mathbf{f}} \frac{{{\f^T}\L\f}}{{{\f^T}\D\f}},
\label{eq:relaxed_ncut}
\end{equation}
where
\begin{equation}
\f= [f_1,\cdots,f_n]^T \hspace{0.2cm} \mbox{and} \hspace{0.2cm}
{f_i} = \left\{ \begin{array}{ll}
\frac{1}{\mathrm{vol}(\mathcal{A})}  & \mbox{if} \hspace{0.2cm} i \in \mathcal{A} \\
\frac{{ - 1}}{\mathrm{vol}(\mathcal{B})} & \mbox{if} \hspace{0.2cm} i\in \mathcal{B}
\end{array} \right.
\label{eq:relaxed_ncut2}
\end{equation}
We see that $\mathbf{f}$ is \textit{piecewise constant} (PWC).

The problem constraints (\ref{eq:relaxed_ncut2}) are then relaxed, resulting in:
\begin{equation}
\min_{\mathbf{f}} \frac{{{\f^T}\L\f}}{{{\f^T}\D\f}},
\hspace{0.2cm}\mbox{s.t.} \hspace{0.2cm} \f^T\D\mathbf{1} = 0.
\label{eq:relaxed_ncut3}
\end{equation}
The idea is that if solution $\mathbf{f}^*$ satisfies (\ref{eq:relaxed_ncut2}), then $\mathbf{f}^{* T} \mathbf{D} \mathbf{1} = 0$.
Thus a solution $\mathbf{f}^*$ to (\ref{eq:relaxed_ncut3}) can be interpreted as a relaxed / approximate solution to the Ncut problem in (\ref{eq:ncut}).

We now use this result from \cite{Ncut} to interpret low graph frequencies of $\mathcal{L}_n$.
We first define $\v := \D^{1/2}\f$ and $\mathbf{v}_1 := \D^{1/2} \mathbf{1}$.
(\ref{eq:relaxed_ncut3}) can then be rewritten as:
\begin{equation}
\min_{\mathbf{v}}
\frac{{{\v^T}\mathcal{L}_n\v}}{{{\v^T}\v}},\hspace{0.2cm}\mbox{s.t.}
\hspace{0.2cm} \v^T \v_1 = 0.
\label{eq:relaxed_cut4}
\end{equation}
Note that the objective in (\ref{eq:relaxed_cut4}) is the \textit{Rayleigh quotient} with respect to matrix $\mathcal{L}_n$ \cite{MatrixAnalysis}.
It is easy to see that $\v_1$ minimizes this objective:
$\v_1^T \mathcal{L}_n \v_1 = \mathbf{1}^T \D^{1/2} \D^{-1/2} \mathbf{L} \D^{-1/2} \D^{1/2} \mathbf{1} = \mathbf{1}^T \mathbf{L} \mathbf{1} = 0$.
Hence $\v_1$ is the first eigenvector of $\mathcal{L}_n$, and the solution to (\ref{eq:relaxed_cut4}) is the second eigenvector $\v_2$ of $\mathcal{L}_n$.
We can thus conclude the following:
\begin{quote}
\textit{The second eigenvector $\v_2$ of $\mathcal{L}_n$ is a relaxed solution to the Ncut problem; if the solution becomes exact, then $\v_2$ is PWC according to (\ref{eq:relaxed_ncut2})}.
\end{quote}
Eigen-basis of $\mathcal{L}_n$ thus seem suitable to compactly represent PWS signals.

\subsubsection{Smoothness Prior using Random Walk Graph Laplacian}

However, because the first eigenvector of $\mathcal{L}_n$, $\mathbf{v}_1 = \D^{1/2} \mathbf{1}$, is not a constant vector (DC component) in general, the eigen-basis of $\mathcal{L}_n$ are not suitable for filtering of images, which tend to be smooth.
We thus perform a \textit{similarity transformation}\footnote{https://en.wikipedia.org/wiki/Matrix\_similarity} on $\mathcal{L}_n$ to efficiently filter constant signals.
Let $\mathcal{L}_n = \mathbf{V} \mathbf{\Lambda} \mathbf{V}^T$, where $\mathbf{V}$ is a matrix with eigenvectors of $\mathcal{L}_n$ as columns, and $\mathbf{\Lambda}$ is a diagonal matrix with corresponding eigenvalues $\tilde{\eta}_k$ on its diagonal.
We now define $\LL_r := \D^{-1/2}\LL_n \D^{1/2} = \D^{-1}\L$.
We see that $\LL_r$ has left eigenvectors $\mathbf{V}^T \D^{1/2}$ (in rows):
\begin{equation}
\mathbf{V}^T \D^{1/2} \mathcal{L}_r = \mathbf{V}^T \D^{1/2} \D^{-1/2} \mathbf{V} \mathbf{\Lambda} \mathbf{V}^{T} \D^{1/2} =  \mathbf{\Lambda} \mathbf{V}^T \D^{1/2}.
\end{equation}

Note that because $\LL_r$ and $\LL_n$ are similar, they have the same eigenvalues $\tilde{\eta}_k$.
When a constant signal $\mathbf{1}$ is projected onto left eigenvectors $\mathbf{V}^T \D^{1/2}$, $\D^{1/2} \mathbf{1}$ is in fact $\v_1$ in $\mathbf{V}^T$, which is orthogonal to other rows of $\mathbf{V}^T$.
This means that \textit{left eigen-basis coefficients of $\mathcal{L}_r$}, defined as $\boldsymbol{\beta} = \mathbf{V}^T \D^{1/2} \x$, has non-zero only in the first element when $\x = \mathbf{1}$.

However, $\LL_r$ is asymmetric, which cannot be easily decomposed into a set of orthogonal eigenvectors with real eigenvalues.
Thus there is no clear graph frequency interpretation of $\x^T \LL_r \x$ as we did for $\x^T \mathbf{L} \x$ in (\ref{eq:graphfre}).

To achieve symmetry, suppose we use $\LL_r^T\LL_r$ and define the following regularization term instead:
\begin{equation}
\begin{array}{ll}
\x^T\LL_r^T\LL_r\x = \x^T \D^{1/2}\LL_n\D^{-1/2}\D^{-1/2}\LL_n\D^{1/2}\x \\
\hspace{1.6cm}= (\x^T\D^{1/2}\LL_n)\D^{-1}(\LL_n\D^{1/2}\x).
\end{array}
\label{eq:new_regularizer}
\end{equation}
If we write $\boldsymbol{\gamma} = \LL_n\D^{1/2}\x$, then (\ref{eq:new_regularizer}) becomes:
\begin{equation}
\begin{array}{ll}
\x^T\LL_r^T\LL_r\x = \boldsymbol{\gamma}^T\D^{-1}\boldsymbol{\gamma}.
\end{array}
\label{eq:new_regularizer2}
\end{equation}
Since $\D^{-1}$ is a diagonal matrix, one can see that:
\begin{equation}
\frac{{{\boldsymbol{\gamma} ^T}\boldsymbol{\gamma} }}{{{d_{\max }}}}
\le \boldsymbol{\gamma}^T\D^{-1}\boldsymbol{\gamma}
\le \frac{{{\boldsymbol{\gamma} ^T}\boldsymbol{\gamma} }}{{{d_{\min }}}},
\label{eq:new_regularizer2}
\end{equation}
where $d_{\max}$ and $d_{\min}$ are maximum and minimum vertex degrees in graph $\mathcal{G}$.
So instead of minimizing $\boldsymbol{\gamma}^T\D^{-1}\boldsymbol{\gamma}$, we can minimize its upper bound $(d_{\min}^{-1}){\boldsymbol{\gamma} ^T}\boldsymbol{\gamma}$.

We can interpret $\boldsymbol{\gamma}^T \boldsymbol{\gamma}$ more naturally as follows.
$\boldsymbol{\gamma}^T \boldsymbol{\gamma}$ can now be written as:
\begin{align}
\boldsymbol{\gamma}^T \boldsymbol{\gamma} & = \x^T \D^{1/2} \mathbf{V} \mathbf{\Lambda} \mathbf{V}^T
\mathbf{V} \mathbf{\Lambda} \mathbf{V}^T \D^{1/2} \x \nonumber \\
& = \boldsymbol{\beta}^T \mathbf{\Lambda}^2 \boldsymbol{\beta}
= \sum_k \tilde{\eta}_k^2 \, \beta_k^2.
\label{eq:freqInterReg}
\end{align}
Thus we can now have a graph frequency interpretation of our regularizer $(d_{\min}^{-1}) \boldsymbol{\gamma}^T \boldsymbol{\gamma}$: high frequencies of $\mathcal{L}_n$ (or $\mathcal{L}_r$) are suppressed to restore smooth signal $\x$.
For convenience, we call our regularizer $(d_{\min}^{-1}) \boldsymbol{\gamma}^T \boldsymbol{\gamma}$ \textit{\textbf{L}eft \textbf{E}igenvector \textbf{Ra}ndom-walk \textbf{G}raph Laplacian} (LERaG).

For efficient computation, $\boldsymbol{\gamma}^T \boldsymbol{\gamma}$ can be most efficiently computed as:
\begin{equation}
\boldsymbol{\gamma}^T \boldsymbol{\gamma} = \x^T \mathbf{L} \D^{-1} \mathbf{L} \x.
\label{eq:simpReg}
\end{equation}

Compared to $\x^T \mathbf{L} \x$ in (\ref{eq:graphprior}), LERaG is based on the normalized variant $\mathcal{L}_r$, and thus its filtering is insensitive to the vertex degrees of particular graphs.
Further, unlike $\x^T \mathcal{L}_n \x$, LERaG is based on left eigenvectors of $\mathcal{L}_r$ that efficiently filter constant signals, thus is suitable for image filtering.
Finally, compare to \cite{PaymanTIP14} which requires non-trivial computation to identify transformation matrices to make $\mathcal{L}_n$ row and column stochastic, LERaG can be computed simply via (\ref{eq:simpReg}).
Thus our regularizer has desirable filtering properties with little computation overhead compared to previous graph-based regularizers in the literature.

\subsubsection{Analysis of Ideal Piecewise Constant Signals}

We now show that LERaG computes to 0 for ideal PWC signal $\x = [x_1,\cdots,x_n]^T$, where:
\begin{equation}
x_i = \left\{ \begin{array}{ll}
c_1 & \mbox{if}~ 1 \leq i \leq l \\
c_2 & \mbox{if}~ l < i \leq n
\end{array} \right.
\label{eq:pwc_sig}
\end{equation}
where constants $c_1$ and $c_2$ are sufficiently different, \textit{i.e.}, $|c_1 - c_2| > \Delta$ for some sufficiently large $\Delta$.

Using (\ref{eq:weight}) to compute edge weights, this PWC signal implies that $W_{i,j} = 0$ if $i$ and $j$ belong to different constant pieces.
Assume further that $\sigma_2$ is sufficiently large relative to location differences $\| l_i - l_j\|^2$, so that the first exponential kernel in (\ref{eq:weight}) alone determines $W_{i,j}$.
The adjacency matrix $\mathbf{W}$ and the graph Laplacian $\L$ are then both block-diagonal:

\vspace{-0.1in}
\begin{small}
\begin{equation}
\mathbf{W} = \left[ \begin{array}{cc}
\mathbf{A}_{l} & \mathbf{0}_{l \times (n-l)} \\
\mathbf{0}_{(n-l) \times l} & \mathbf{A}_{n-l}
\end{array} \right],
~
\L = \left[ \begin{array}{cc}
\mathbf{B}_{l} & \mathbf{0}_{l \times (n-l)} \\
\mathbf{0}_{(n-l) \times l} & \mathbf{B}_{n-l}
\end{array} \right]
\end{equation}
\end{small}\noindent
where $\mathbf{0}_{i \times j}$ is a $i \times j$ matrix of all zeros, and $\mathbf{A}_i$ and $\mathbf{B}_i$ are respective $i \times i$ adjacency and combinatorial graph Laplacian matrices for full unweighted graph of $i$ vertices:

\vspace{-0.1in}
\begin{footnotesize}
\begin{equation}
\mathbf{A}_i = \left[ \begin{array}{cccc}
0 & 1 & \ldots & \\
1 & 0 & 1 & \ldots \\
\vdots & & \ddots &
\end{array} \right],
~
\mathbf{B}_i = \left[ \begin{array}{cccc}
i-1 & -1 & \ldots & \\
-1 & i-1 & -1 & \ldots \\
\vdots & & \ddots &
\end{array} \right]
\end{equation}
\end{footnotesize}

Clearly, normalized $\LL_n = \mathbf{D}^{-1/2}\L\mathbf{D}^{-1/2}$ is also block-diagonal:
\begin{equation}
\LL_n = \left[ \begin{array}{cc}
\tilde{\mathbf{B}}_{l} & \mathbf{0}_{l \times (n-l)} \\
\mathbf{0}_{(n-l) \times l} & \tilde{\mathbf{B}}_{n-l}
\end{array} \right]
\label{eq:normL_PWC}
\end{equation}
where $\tilde{\mathbf{B}}_i$ is a $i \times i$ normalized graph Laplacian for a full unweighted graph of $i$ vertices:

\vspace{-0.1in}
\begin{small}
\begin{equation}
\tilde{\mathbf{B}}_i = \left[ \begin{array}{cccc}
1 & -1/(i-1) & \ldots & \\
-1/(i-1) & 1 & -1/(i-1) & \ldots \\
\vdots & & \ddots &
\end{array} \right]
\label{eq:normL_full}
\end{equation}
\end{small}

We already know $\v_1 = \D^{1/2} \mathbf{1}$ is the first eigenvector corresponding to eigenvalue $0$.
Further, we see also that a second (unnormalized) eigenvector $\mathbf{v}_2$ corresponding to eigenvalue $0$ can be constructed as:
\begin{equation}
v_{2,i} = \left\{ \begin{array}{ll}
1/l(l-1)^{1/2} & \mbox{if}~ 1 \leq i \leq l \\
-1/(n-l)(n-l-1)^{1/2} & \mbox{if}~ l < i \leq n
\end{array} \right.
\end{equation}
It is easy to verify that $\mathcal{L}_n \mathbf{v}_2 = \mathbf{0}$ and $\mathbf{v}_1^T \mathbf{v}_2 = 0$.
In fact, we see that $\f^* = \D^{-1/2} \v_2$ takes the form in (\ref{eq:relaxed_ncut2}); \textit{i.e.}, for this graph corresponding to an ideal PWC signal, the relaxed solution $\v_2$ to Ncut is also the optimal one, which is PWC.


Analyzing coefficients $\boldsymbol{\beta} = \mathbf{V}^T \D^{1/2} \x$, we see that $\D^{1/2} \mathbf{x}$ can be expressed \textit{exactly} as a linear combination of the first two eigenvectors of $\mathcal{L}_n$, \textit{i.e.}, $\D^{1/2} \mathbf{x} = a_1 {\v_1} + a_2 {\v_2}$, where
 \begin{equation}
 a_1 = \frac{c_1l(l-1)+c_2(n-l)(n-l-1)}{{(n - l)(n - l - 1) + l(l - 1)}},
  \end{equation}
and
 \begin{equation}
 \small
  a_2 =  \frac{(c_1 - c_2)l(l-1)(n-l)(n-l-1)}{{(n - l)(n - l - 1) + l(l - 1)}}.
  \end{equation}
Because the first two eigenvectors $\v_1$ and $\v_2$ correspond to eigenvalue $0$ of $\mathcal{L}_n$, we can conclude:

\begin{quote}
\textit{Given an ideal two-piece PWC signal $\x$ defined in (\ref{eq:pwc_sig}), $\D^{1/2} \x$ can be represented exactly using the first two eigenvectors of $\mathcal{L}_n$ corresponding to eigenvalue $0$, and hence LERaG evaluates to $0$.}
\end{quote}

In other words, $\D^{1/2} \x$ is also ideal low-pass (cut-off frequency at 0) given eigenvectors of $\mathcal{L}_n$, and so LERaG, which penalizes high frequencies according to (\ref{eq:freqInterReg}), computes to $0$.
Another interpretation is that an ideal two-piece PWC signal leads to two clusters of identical vertices, hence $\v_2$ of $\mathcal{L}_n$ is an exact solution to Ncut, which is PWC (\ref{eq:relaxed_ncut2}).
In this case, $\v_1$ and $\v_2$ are both PWC, so together they are sufficient to represent $\D^{1/2} \x$, also PWC.

\subsubsection{Analysis of Ideal Piecewise Smooth Signals}

We now generalize our analysis of PWC signals to PWS signals. We first define a PWS signal $\mathbf{x}$ as:
\begin{equation}
\begin{array}{ll}
|x_i - x_j| \leq \delta & \mbox{if}~~ i,j \leq l ~~\mbox{or}~~ i,j > l \\
|x_i - x_j| > \Delta & \mbox{if}~~ i \leq l < j ~~\mbox{or}~~ j \leq l < i
\end{array}
\label{eq:pws_sig}
\end{equation}
where $\delta \ll \Delta$.
In words, $\mathbf{x}$ contains two smooth pieces in $[1, l]$ and $[l+1, N]$: two samples in the same piece are similar to within $\delta$, and two samples from two different pieces differ by more than $\Delta$.
Clearly, if $\delta = 0$, then $\mathbf{x}$ is PWC.

We now construct a complete graph for $\mathbf{x}$ using (\ref{eq:weight}) to compute edge weight $W_{i,j}$ between vertices $i$ and $j$.
The resulting normalized $\mathcal{L}_n$ is still block-diagonal for sufficiently large $\Delta$, but the diagonal blocks are no longer $\tilde{\mathbf{B}}_i$ in (\ref{eq:normL_full}).
$\v_2$ in this case can be computed as:
\begin{equation}
v_{2,i} = \left\{ \begin{array}{ll}
\frac{D_{i,i}^{1/2}}{\sum_{j=1}^l D_{j,j}} & \mbox{if}~ 1 \leq i \leq l \\
- \frac{D_{i,i}^{1/2}}{\sum_{j=l+1}^n D_{j,j}} & \mbox{if}~ 1 < i \leq n
\end{array} \right.
\label{eq:evec2PWS}
\end{equation}

One can verify that $\mathcal{L}_n \v_2 = \mathbf{0}$ and $\v_2^T \v_1 = 0$.
$\v_2$ is roughly PWS according to (\ref{eq:evec2PWS}), since $D_{i,i}^{1/2}$ is similar for nodes $i$ within the same smooth piece.
Thus $D^{1/2} \x$, also roughly PWS, can be well approximated as $a_1 \v_1 + a_2 \v_2$, with little energy leakage to high frequencies; \textit{i.e.}, $\D^{1/2} \x$ is approximately low-pass given eigenvectors of $\mathcal{L}_n$.
That means LERaG computes to a small value for PWS signal $\x$.

In the more general case when signal $\x$ is not PWS, then $\mathcal{L}_n$ is not block-diagonal, and second eigenvector $\v_2$ computed in (\ref{eq:relaxed_cut4}) corresponds to an eigenvalue $\tilde{\eta}_2 > 0$.
This first non-zero eigenvalue $\tilde{\eta}_2$ is called the \textit{Fiedler number} \cite{SPIELMAN}, and is a measure of how connected the graph is.
In our signal restoration context, it means that the closer $\x$ is to PWS, the closer $\tilde{\eta}_2$ is to $0$, and the more likely LERaG can recover the signal $\x$ well.


\begin{figure}[t!]
\centering
\includegraphics[width = 0.5\textwidth]{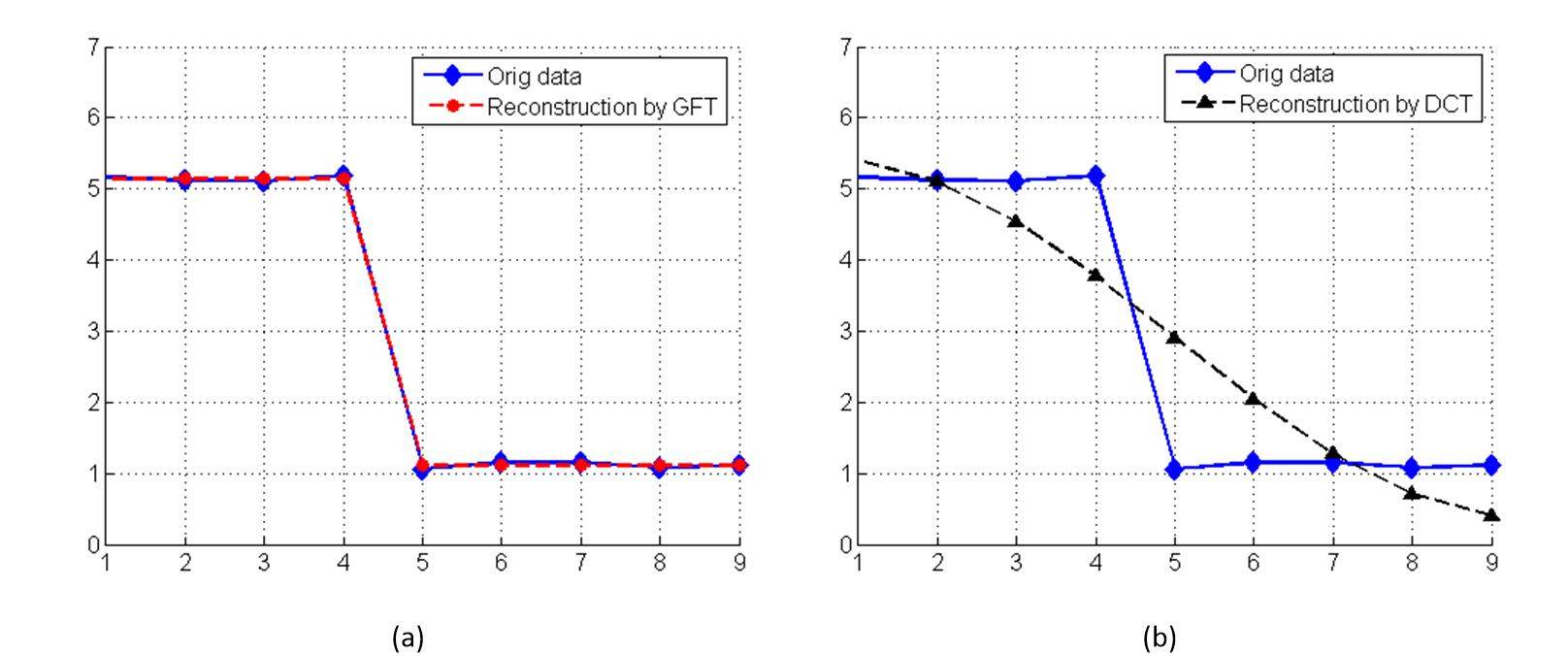}
\caption{Signal reconstruction comparison using only the first two right eigenvectors of $\mathcal{L}_r$ and DCT. }
\label{fig:rec}
\vspace{-0.3cm}
\end{figure}


For intuition, we illustrate a simple 1-D PWS signal, where $\delta = 0.2$ and $\Delta = 4$. 
Fig.~\ref{fig:rec} (a) shows an example where the first two right eigenvectors of $\mathcal{L}_r$ can represent the original signal well.
In contrast, as illustrated in Fig.~\ref{fig:rec} (b), reconstruction using only the first two DCT frequencies is poor.

\section{Soft Decoding based on Prior Mixture}
\label{sec:optimization}
We now combine the priors discussed in previous sections to construct an efficient JPEG soft decoding algorithm.

\subsection{The Objective Function Formulation}

Combining the sparsity prior and LERaG, we arrive at the following optimization for soft decoding:
\begin{equation}
\label{eq:obj_priors}
\begin{array}{l}
\mathop{\arg \min} \limits_{\{\x, \a \}}  \left\| {\x - \bm{\Phi} \a } \right\|_2^2 + {\lambda_1}{\left\| \a  \right\|_0} + \lambda_2\x^T (d_{\min}^{-1})\mathbf{L} \D^{-1} \mathbf{L}\x, \\
\mbox{s.t.}~ \q\Q \preceq \mathbf{T}\M\x \prec (\q+1)\Q
\end{array}
\end{equation}
where both patch $\x$ and its sparse code $\a$ are unknown.

For parameter $\lambda_1$ of the sparsity prior, we empirically set $\lambda_1 = 0.001$. For parameter $\lambda_2$ of the graph-signal smoothness prior, we set it adaptively as follows.
As argued in Section IV-C, if a patch contains large high DCT frequencies, then it will not be recovered well by the sparsity prior.
We thus adaptively increase $\lambda_2$ if q-bin indices $\mathbf{q}$ indicate the presence of high DCT frequencies in target $\x$.


Since by design adjacent patches have overlaps, the pixel values in an overlapping region from two patches should be as similar as possible. To this end, upon obtaining all the optimal reconstructed patches $\{\x_i^*\}$, the soft-decoded image can then be obtained by averaging over pixels in overlapped regions.

\subsection{Optimization}

Optimization of (\ref{eq:obj_priors}) is not convex. Instead, we propose to employ an alternating procedure to optimize $\x$ and $\a$ iteratively. First, for initialization we compute the Laplacian prior based MMSE solution per code block for the entire image to obtain $\x^{(0)}$.
Then, the $l$-th ($l>0$) iteration of the optimization procedure is described as follows:

1) Fix $\x$ and estimate $\a$:

The optimization problem becomes a standard sparse coding:
\begin{equation}
\label{eq:standard}
\a^{(l)} = \mathop{\arg \min} \limits_{\a} \left\| {\x^{(l-1)} - \bm{\Phi} \a } \right\|_2^2 + {\lambda_1}{\left\| \a  \right\|_0},
\end{equation}
which can be efficiently solved by some well-developed greedy $\ell_0$-minimization algorithm such as OMP.

2) Fix $\a$ and estimate $\x$:

The objective function becomes:
\begin{equation}
\label{eq:obj2}
\begin{array}{l}
\x^{(l)} = \mathop{\arg \min} \limits_{\x} \left\| {\x - \bm{\Phi} \a^{(l)}} \right\|_2^2 + \lambda_2\x^T (d_{\min}^{-1})\mathbf{L} \D^{-1} \mathbf{L}\x,\\
 \mbox{s.t.}~ \q\Q \preceq \mathbf{T}\M\x \prec (\q+1)\Q
\end{array}
\end{equation}
which can be solved efficiently using quadratic programming \cite{Nocedal06}.

The outputted solution $\x^{(l)}$ is used as the initial for $(l+1)$-th iteration. The graph $\G$ is updated according to $\x^{(l)}$. The algorithm terminates when both $\x$ and $\a$ converge. The local convergence can be proved in a similar way as \textbf{Lemma 1}.

\begin{table*}[t!]
\begin{center}
\renewcommand{\arraystretch}{1.3}
\tabcolsep=5.7pt
\caption{Quality comparison with respect to PSNR (in dB) and SSIM at QF = 5}
\label{tab:PSNR1}
\begin{tabular}{|c||c|c||c|c||c|c||c|c||c|c||c|c||c|c||}
\hline
\multirow{2}{*}{Images} &\multicolumn{2}{c||}{JPEG} &\multicolumn{2}{c||}{BM3D \cite{BM3DSAPCA}} &\multicolumn{2}{c||}{KSVD \cite{KSVD}} &\multicolumn{2}{c||}{ANCE \cite{ZhangTIP}}&\multicolumn{2}{c||}{DicTV \cite{DicTV}}&\multicolumn{2}{c||}{SSRQC \cite{SSRQC}} &\multicolumn{2}{c||}{Ours}
\\ \cline{2-3} \cline{4-5}\cline{6-7}\cline{8-9}\cline{10-11}\cline{12-13}\cline{14-15}
           &PSNR &SSIM &PSNR &SSIM &PSNR &SSIM &PSNR &SSIM &PSNR &SSIM &PSNR &SSIM &PSNR &SSIM\\\hline
\emph{Butterfly}	&22.65	&0.7572	&23.91	&0.8266	&24.55	&0.8549 &24.34	&0.8532	&23.54	&0.8228	&25.31	&0.8764	&\textbf{25.82}	&\textbf{0.8861}\\\hline
\emph{Leaves}	    &22.49	&0.7775	&23.78	&0.8408	&24.39	&0.8684 &24.18	&0.8551	&23.27	&0.8245	&25.01	&0.8861	&\textbf{25.57}	&\textbf{0.8979}\\\hline
\emph{Hat}	        &25.97	&0.7117	&26.79	&0.7497	&27.41	&0.7802 &27.09	&0.7706	&27.33	&0.7707	&27.27	&0.7753	&\textbf{27.71}	&\textbf{0.7946}\\\hline
\emph{Boat}	        &25.23	&0.7054	&26.31	&0.7547	&26.85	&0.7739 &26.59	&0.7637	&26.31	&0.7491	&26.85	&0.7745	&\textbf{27.17}	&\textbf{0.7783}\\\hline
\emph{Bike}	        &21.72	&0.6531	&22.60	&0.7039	&22.84	&0.7130 &22.74	&0.6973	&22.28	&0.6952	&22.96	&0.7119	&\textbf{23.32}	&\textbf{0.7291}\\\hline
\emph{House}	    &27.76	&0.7732	&28.87	&0.8020	&29.53	&0.8185 &29.07	&0.8131	&29.59	&0.8072	&29.92	&0.8226	&\textbf{30.25}	&\textbf{0.8237}\\\hline
\emph{Flower}	    &24.51	&0.6866	&25.49	&0.7352	&25.84	&0.7471 &25.54	&0.7337	&25.88	&0.7316	&25.69	&0.7407	&\textbf{25.93}	&\textbf{0.7521}\\\hline
\emph{Parrot}	    &26.15	&0.7851	&27.40	&0.8329	&27.22	&0.8465	&27.81	&0.8475	&27.92	&0.8382	&28.21	&0.8566	&\textbf{28.25	}&\textbf{0.8572}\\\hline
\emph{Pepper512}	&27.17	&0.7078	&28.31	&0.7573	&29.32	&0.7949 &29.03	&0.7891	&28.81	&0.7769	&29.28	&0.7948	&\textbf{29.81}	&\textbf{0.7984}\\\hline
\emph{Fishboat512}	&25.56	&0.6563	&26.44	&0.6921	&26.87	&0.7072 &26.72	&0.6994	&26.35	&0.6963	&26.86	&0.7042	&\textbf{27.04}	&\textbf{0.7113}\\\hline
\emph{Lena512}	    &27.32	&0.7365	&28.43	&0.7788	&29.13	&0.8079 &28.96	&0.8024	&28.51	&0.7976	&29.19	&0.8072	&\textbf{29.29}	&\textbf{0.8077}\\\hline
\emph{Airplane512}	&26.34	&0.7843	&27.11	&0.8101	&27.62	&0.8262 &27.40	&0.8201	&26.95	&0.8112	&28.90	&0.8261	&\textbf{29.19}	&\textbf{0.8482}\\\hline
\emph{Bike512}	    &22.25	&0.6231	&23.12	&0.6693	&23.39	&0.6781 &23.24	&0.6605	&22.73	&0.6559	&24.60	&0.6815	&\textbf{24.86}	&\textbf{0.6992}\\\hline
\emph{Statue512}	&25.72	&0.6629	&33.61	&0.9188	&27.03	&0.7371 &26.83	&0.7265	&26.51	&0.7268	&28.03	&0.7627	&\textbf{28.30}	&\textbf{0.7685}\\\hline\hline
Average	            &25.06	&0.7157	&26.58	&0.7765	&26.62	&0.7827	&26.39	&0.7737	&26.14	&0.7645	&27.01	&0.7872	&\textbf{27.32}	&\textbf{0.7941}\\\hline
\end{tabular}
\end{center}
\vspace{-0.3cm}
\end{table*}

\begin{table*}[t!]
\begin{center}
\renewcommand{\arraystretch}{1.3}
\tabcolsep=5.7pt
\caption{Quality comparison with respect to PSNR (in dB) and SSIM at QF = 40}
\label{tab:PSNR2}
\begin{tabular}{|c||c|c||c|c||c|c||c|c||c|c||c|c||c|c||}
\hline
\multirow{2}{*}{Images} &\multicolumn{2}{c||}{JPEG} &\multicolumn{2}{c||}{BM3D \cite{BM3DSAPCA}} &\multicolumn{2}{c||}{KSVD \cite{KSVD}} &\multicolumn{2}{c||}{ANCE \cite{ZhangTIP}}&\multicolumn{2}{c||}{DicTV \cite{DicTV}}&\multicolumn{2}{c||}{SSRQC \cite{SSRQC}} &\multicolumn{2}{c||}{Ours}
\\ \cline{2-3} \cline{4-5}\cline{6-7}\cline{8-9}\cline{10-11}\cline{12-13}\cline{14-15}
           &PSNR &SSIM &PSNR &SSIM &PSNR &SSIM &PSNR &SSIM &PSNR &SSIM &PSNR &SSIM &PSNR &SSIM\\\hline
\emph{Butterfly}	&29.97	&0.9244	     &31.35	&0.9555	  &31.57	&0.9519        &31.38	&0.9548	&31.22	&0.9503	&32.02	&0.9619	   &\textbf{32.87}	&\textbf{0.9627}\\\hline
\emph{Leaves}	    &30.67	&0.9438	     &32.55	&0.9749	  &33.04	&0.9735        &32.74	&0.9728	&32.45	&0.9710	&32.13	&0.9741	   &\textbf{34.42}	&\textbf{0.9803}  \\\hline
\emph{Hat}	        &32.78	&0.9022	     &33.89	&0.9221	  &33.62	&0.9149        &33.69	&0.9169	&33.20	&0.8988	&34.10	&0.9237	   &\textbf{34.46}	&\textbf{0.9268}     \\\hline
\emph{Boat}	        &33.42	&0.9195	     &34.77	&0.9406   &34.28	&0.9301	       &34.64	&0.9362	&26.08	&0.7550	&33.88	&0.9306	   &\textbf{34.98}	&\textbf{0.9402}       \\\hline
\emph{Bike}	        &28.98	&0.9131	     &29.96	&0.9356   &30.19	&0.9323	       &30.31	&0.9357	&29.75	&0.9154	&30.35	&0.9411	   &\textbf{31.14}	&\textbf{0.9439}        \\\hline
\emph{House}	    &35.07	&0.8981	     &36.09	&0.9013	  &36.05	&0.9055        &36.12	&0.9048	&35.17	&0.8922	&36.49	&0.9072	   &\textbf{36.55}	&\textbf{0.9071}  \\\hline
\emph{Flower}	    &31.62	&0.9112	     &32.81	&0.9357	  &32.63	&0.9271        &32.67	&0.9314	&31.86	&0.9084	&33.02	&0.9362	   &\textbf{33.37}	&\textbf{0.9371}    \\\hline
\emph{Parrot}	    &34.03	&0.9291	     &34.92	&0.9397	  &34.91	&0.9371        &35.02	&0.9397	&33.92	&0.9227	&35.11	&0.9401	   &\textbf{35.32}	&\textbf{0.9401}     \\\hline
\emph{Pepper512}	&34.21	&0.8711	     &34.94	&0.8767	  &34.89	&0.8784        &34.99	&0.8803	&34.24	&0.8639	&35.05	&0.8795	   &\textbf{35.19}	&\textbf{0.8811}  \\\hline
\emph{Fishboat512}  &32.76	&0.8763	     &33.61	&0.8868   &33.36	&0.8809	       &33.60	&0.8861	&32.53	&0.8496	&33.68	&0.8859	   &\textbf{33.73}	&\textbf{0.8871}        \\\hline
\emph{Lena512}	    &35.12	&0.9089	     &36.03	&0.9171	  &35.82	&0.9146        &36.04	&0.9177	&34.85	&0.8986	&36.09	&0.9187	   &\textbf{36.11}	&\textbf{0.9191}    \\\hline
\emph{Airplane512}  &33.36	&0.9253	     &34.38	&0.9361   &34.36	&0.9341	       &34.53	&0.9358	&33.75	&0.9134	&35.81	&0.9355	   &\textbf{36.07}	&\textbf{0.9439 }     \\\hline
\emph{Bike512}	    &29.43	&0.9069	     &30.47	&0.9299	  &30.66	&0.9258        &30.71   &0.9298	&30.05  &0.9043	&32.26  &0.9372    &\textbf{32.55}&\textbf{0.9387}        \\\hline
\emph{Statue512}	&32.78	&0.9067	     &33.61	&0.9188	  &33.55	&0.9149        	&33.55	&0.9193	&32.53	&0.8806	   &34.88	&0.9249&\textbf{34.95}	&\textbf{0.9273}   \\\hline\hline
Average	            &32.44	&0.9097	     &33.52	&0.9264   &33.50	&0.9229	       	&33.57	&0.9258	&32.25	&0.8945    &33.91	&0.9283	   &\textbf{34.41}	&\textbf{0.9311}        \\\hline
\end{tabular}
\end{center}
\vspace{-0.3cm}
\end{table*}

\section{Experimentation}
\label{sec:results}

\begin{figure}[t!]
\centering
\includegraphics[width = 0.5\textwidth]{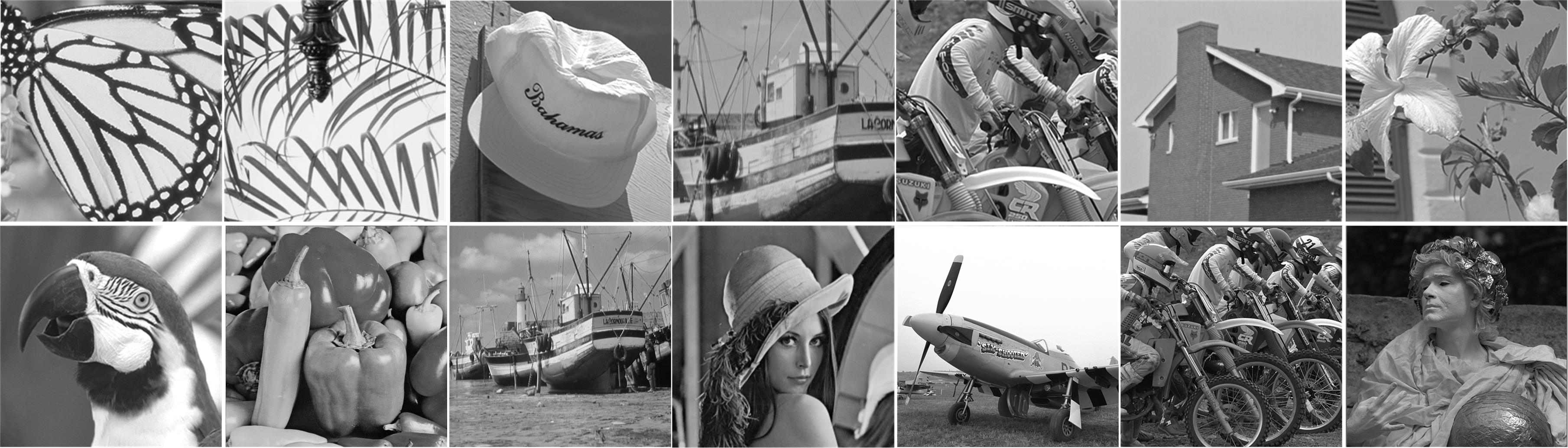}
\caption{Fourteen test images. The image order is the same as Table I and II.}
\label{fig:3-testimg}
\vspace{-0.3cm}
\end{figure}

We now present experimental results to demonstrate the superior performance of our soft decoding algorithm.
We selected fourteen widely used images in the literature as test images. The first eight images are sized of $256 \times 256$, the rest six images are sized of $512 \times 512$, as illustrated in Fig.~\ref{fig:3-testimg}.

\subsection{Competing Methods}


Our proposed method is compared against: 1) BM3D \cite{BM3D}: a state-of-the-art image denoising method, which is included because the restoration of compressed images can be viewed as a denoising problem; in our test, the extended version \cite{BM3DSAPCA} is used, which achieves better performance than original BM3D.
2) KSVD \cite{KSVD}: a well-known sparse coding framework; we test KSVD with a large enough over-complete dictionary ($100 \times 4000$). In contrast, our scheme uses a much smaller one ($100 \times 400$). We will demonstrate that our scheme achieves better reconstruction performance along with lower computational complexity compared with KSVD.
3) ANCE \cite{ZhangTIP}: a state-of-the-art JPEG image restoration algorithm.
4) DicTV \cite{DicTV}: a recent sparsity-based compressed image restoration algorithm, which exploits both sparsity and TV priors.
5) SSRQC \cite{SSRQC}: a state-of-the-art JPEG soft decoding method published very recently, which can be regarded as one of the best algorithms to-date.
The source codes of compared methods are all kindly provided by their authors.

The determination of parameters deserves clarification.
Regarding the size of patches, larger patches can result in more overlapping pixels among neighboring $8 \times 8$ code blocks, which will be beneficial in removing block artifcacts.
However, it will also make the task of finding their sparse representations difficult.
Thus we set the size of the patch to be $10 \times 10$, overlapping 2 pixels horizontally and vertically with neighboring patches, to achieve a good tradeoff.
For the uncompressed training set used to obtain the over-complete dictionary, we randomly select five images from the Kodak Lossless True Color Image Suite\footnote{ http://r0k.us/graphics/kodak/}. The training set does not have any overlap with the test set.

Our method can be easily extended to restore compressed color images.
When compressing color images, JPEG first performs YUV color transformation, and then compresses the resulting Y, U and V channels separately.
As the image signal energy is highly packed into the luminance channel Y, we apply our proposed scheme to Y.
For chrominance channels U and V, we only use the graph-signal smoothness prior to speed up the restoration process.
Due to space limitation, we only report the test results on gray-level images in the following.


\begin{figure*}[t!]
\centering
\includegraphics[width = 0.99\textwidth]{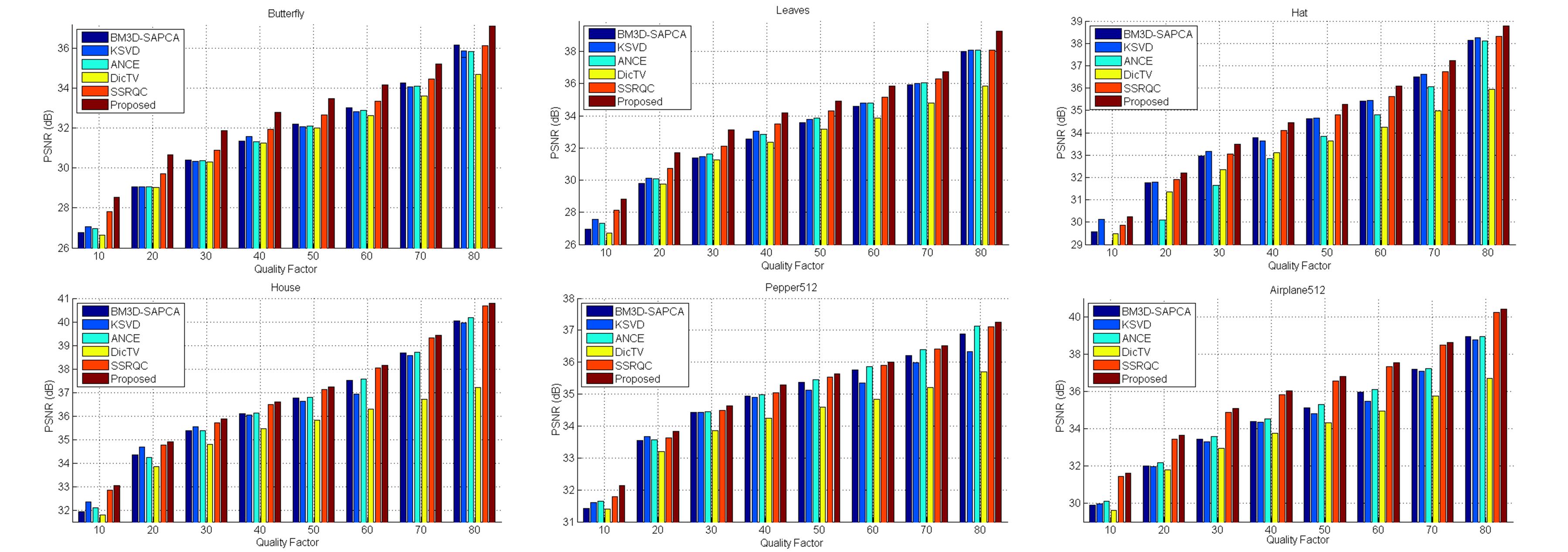}
\caption{ QF-PSNR performance comparison for QFs ranging from 10 to 80. The proposed method achieves the best PSNR performance on all QFs.}
\label{fig:3-psnrcurve}
\vspace{-0.3cm}
\end{figure*}

\subsection{Objective Performance Comparison}

Table~\ref{tab:PSNR1} and Table~\ref{tab:PSNR2} tabulate the objective evaluations (PSNR and SSIM) of the above algorithms for fourteen test images.
The test images are coded by a JPEG coder with with a small QF = 5 and a medium QF = 40.

When QF = 5, the average PSNR gains are 0.74dB and 0.7dB respectively, compared to two well-known algorithms BM3D and KSVD. Note that KSVD uses a large enough dictionary for reconstruction. While our method uses a much smaller dictionary, it achieves better results than KSVD, which demonstrate the effectiveness of the proposed graph-signal smoothness prior. Compared to the state-of-the-art ANCE algorithm, our method greatly improves the reconstruction quality. The PSNR gain is up to 1.48dB (\emph{Butterfly}) and the average PSNR gain is 0.93dB. Our method also performs better than state-of-the-art sparsity based methods.
Compared to DicTV, designed specifically to restore compressed images, the average gain of our method is 1.18dB.
Compared to SSRQC, the newest sparsity-based soft decoding method, our method achieves an average gain of 0.31dB.
When QF = 40, the proposed algorithm also performs better than other methods for all test images.
The average PSNR gains are 0.89dB, 0.91dB, 0.84dB, 2.16dB and 0.5dB, respectively.

We further show the test results for a wide range of QFs (from 10 to 80), illustrated in Fig.~\ref{fig:3-psnrcurve}.
Due to the space limitation, here we only report the test results over six test images, including four $256 \times 256$ images and two $512 \times 512$ images.
We observe that our method consistently performs better than other methods.


\subsection{Subjective Performance Comparison}


We now demonstrate that our soft decoding algorithm also achieves better perceptual quality of the restored images.
When QF is 5, the quantization noise is severe, which leads to very poor subjective quality of JPEG-compressed images.
Therefore, we use QF = 5 as an example to evaluate visual quality of different schemes.

Fig.~\ref{fig:3-butterfly} and Fig.~\ref{fig:3-leaves} illustrate the perceptual quality for different methods.
The test images \emph{Butterfly} and \emph{Leaves} have rich structure regions.
We use them as examples to demonstrate the performance of different methods in recovering structure regions.
The images reproduced by BM3D suffer from highly visible noises, especially the blocking artifacts.
KSVD achieves better results than BM3D, where fewer blocking artifacts are detectable.
However, KSVD cannot preserve the edge structure well.
In results produced by DicTV, there are still strong blocking artifact.
This is because, in DicTV, the dictionary is learned from the JPEG compressed image itself.
When quantization step is large, the structure noise is also learned as atoms of dictionary.
Therefore, it will enhance but not suppress the quantization noise through subsequent sparse coding based restoration.
ANCE can suppress most blocking artifacts, but there are still noticeable artifacts along edges.
SSRQC removes most block artifacts.
However, it can be observed that the edge regions are blurred to some extent.

The images restored by our method are much cleaner, in which the structures and sharpness of edges are well preserved.
Our proposed method primarily benefit from exploiting both sparsity prior for recovering textures (low DCT frequencies) and graph-signal smoothness prior for recovering structures (high DCT frequencies).
The experimental results validate this point.
Our proposed method can also remove DCT blocking artifacts in smooth areas completely, which are largely free of the staircase and ringing artifacts along edges.
Due to space limitation, here we only show the subjective comparisons for low QF, as the superiority can be better visually reflected in these cases. Our method also achieves better subjective quality for medium to high QFs as well.

\begin{figure*}[ht!]
\centering
\includegraphics[width = 0.75\textwidth]{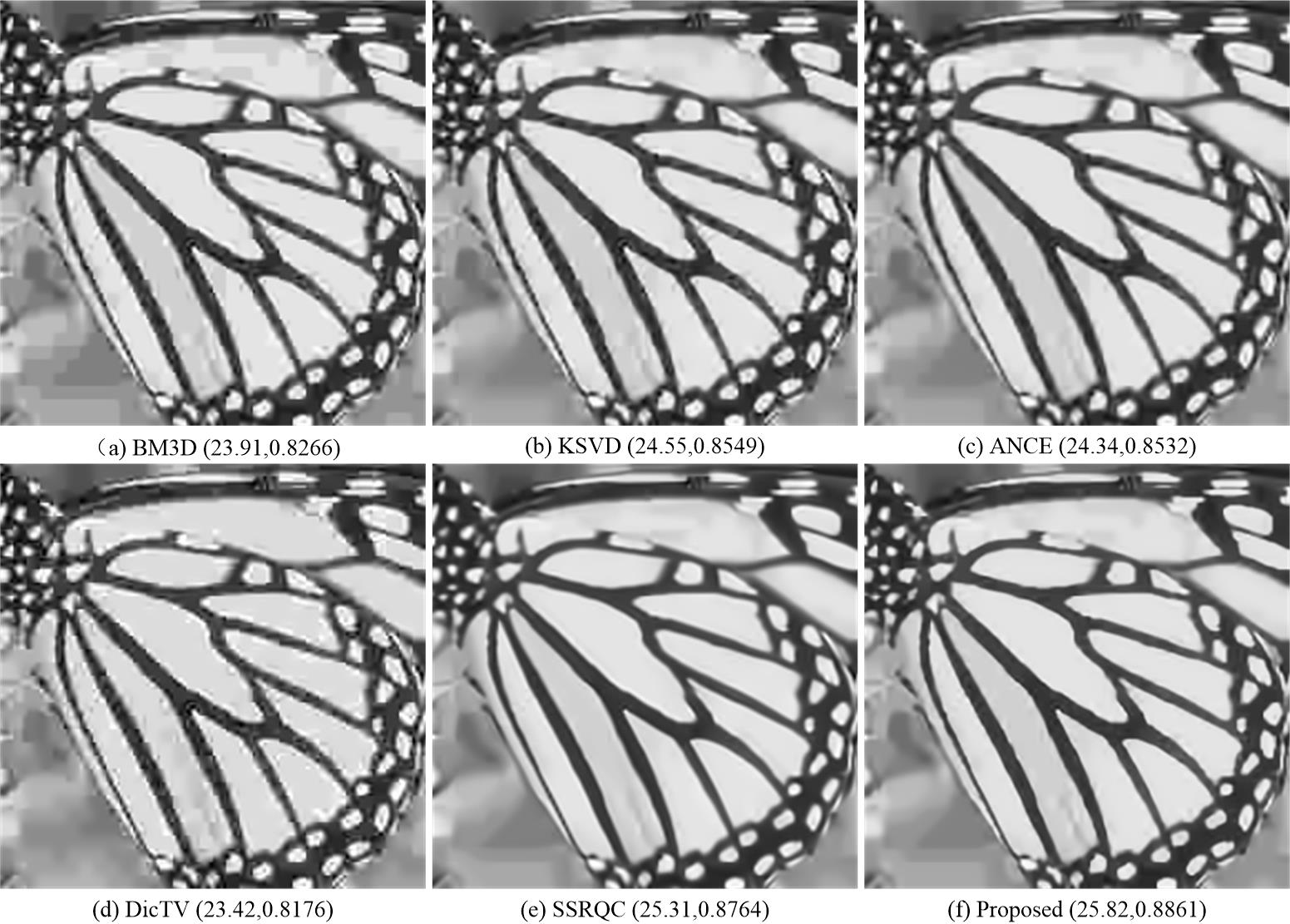}
\caption{Comparison of tested methods in visual quality on \emph{Butterfly} at QF = 5. The corresponding PSNR and SSIM values are also given as references.}
\label{fig:3-butterfly}
\vspace{-0.1cm}
\end{figure*}

\begin{figure*}[ht!]
\centering
\includegraphics[width = 0.75\textwidth]{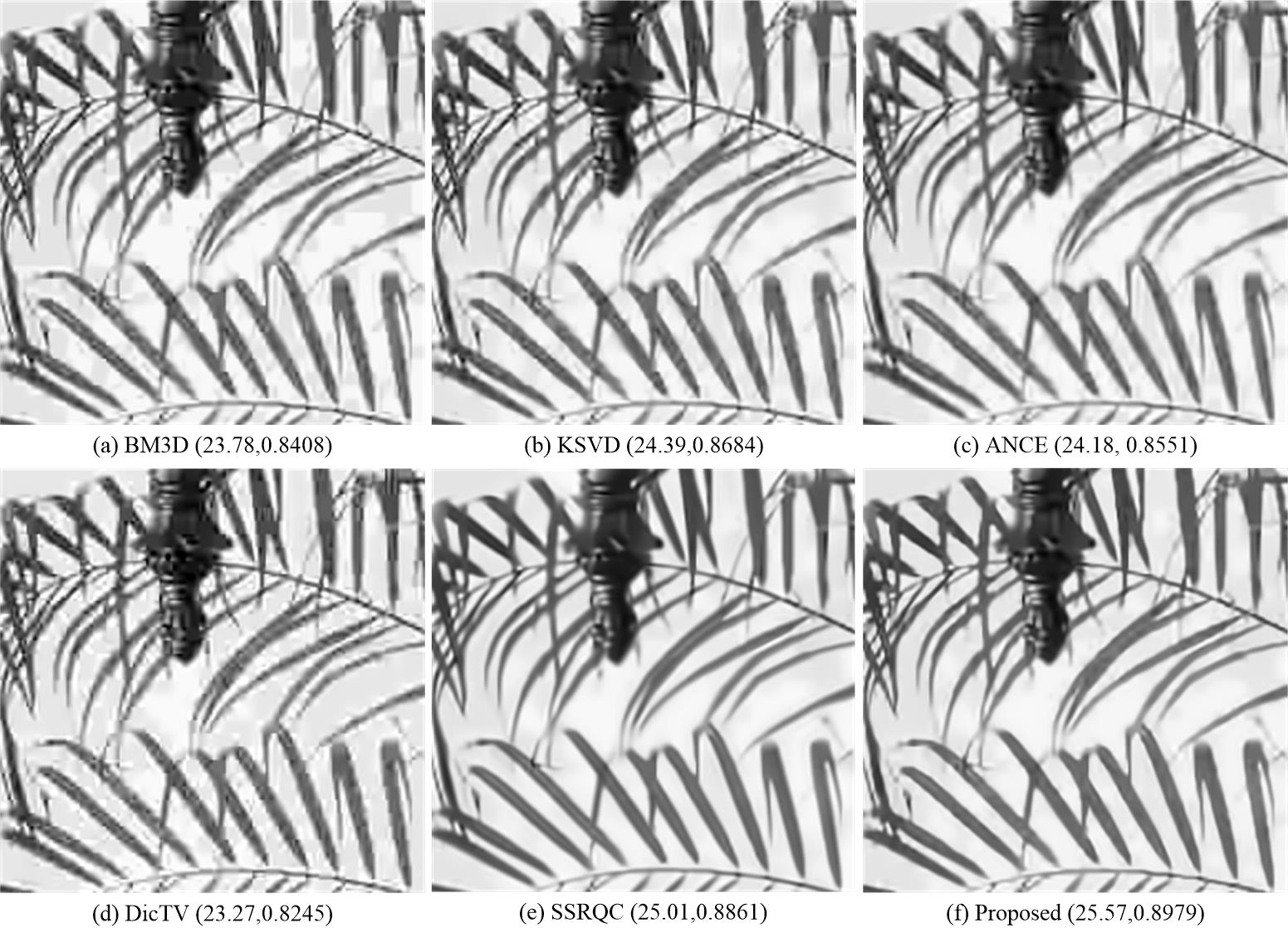}
\caption{Comparison of tested methods in visual quality on \emph{Leaves} at QF = 5. The corresponding PSNR and SSIM values are also given as references.}
\label{fig:3-leaves}
\vspace{-0.3cm}
\end{figure*}

\subsection{Computational Complexity Comparison}

We further report the results on computational complexity comparison.
Here we show the average running time comparison on eight $256 \times 256$ test images when QF = 40.
The compared methods are running on a typical laptop computer (Intel Core i7 CPU \@2.6GHz, 8G Memory, Win10, Matlab R2014a).
As depicted in Table~\ref{tab:TIME}, the complexity of our method is lower than the state-of-the-art algorithms BM3D (BM3D-SAPCA) and ANCE. Compared with K-SVD, which is with a large enough over-complete dictionary, the proposed scheme can significantly reduce the computational complexity.

\begin{table}[htbp]
\renewcommand{\arraystretch}{1.6}
\tabcolsep=3.8pt
  \centering
  \caption{Computation Complexity Comparison (in second) when QF = 40}
    \begin{tabular}{c||c|c|c|c|c|c}\hline
 TIME & BM3D & KSVD  &ANCE & DicTV &SSRQC &Proposed \\\hline
Average & 373.35 & 209.71 &307.43 &39.53 &70.32 & 143.73 \\\hline
    \end{tabular}%
  \label{tab:TIME}%
  \vspace{-0.4cm}
\end{table}%

\subsection{ Comparison of Different Graph-signal Smoothness Prior}

To demonstrate the superiority of the proposed graph Laplacian regularizer LERaG, we compare it with the popular combinational graph Laplacian regularizer, symmetrically normalized graph Laplacian regularizer and the doubly stochastic one proposed in \cite{PaymanTIP14}.
We test eight $256 \times 256$ images when QF = 5.
The results reported are PSNR values after the first iteration.
That is, after solving (\ref{eq:standard}), we use three different graph Laplacian regularizer to formulate (\ref{eq:obj2}).
For fairness of comparison, the regularization parameter $\lambda_2$ are carefully selected for optimal performance for each case.
It can be found that the proposed regularizer outperform other three ones with respect to average PSNR.
Compared with other three graph Laplacian regularizers, our method can improve the PSNR performance up to 0.18dB, 0.87dB and 0.42dB, respectively.


\begin{table}[ht]
\begin{center}
\renewcommand{\arraystretch}{1.1}
\tabcolsep=5pt
\caption{Performance comparison of different graph-signal smooth priors with respect to PSNR (in dB) and SSIM at QF = 5}
\label{tab:PSNR3}
\begin{tabular}{|c||c|c|c|c||}
\hline
    Images       &Combinatorial &Normalized &Doubly Stochastic &LERaG \\\hline
\emph{Butterfly}	&25.42	&24.70	   &25.15	  	   &\textbf{25.57}	                  \\\hline
\emph{Leaves}	    &24.99	&24.54	&24.84	 	   &\textbf{25.17}	                    \\\hline
\emph{Hat}	        &27.53	&27.42	&27.43		   &\textbf{27.56}	                 \\\hline
\emph{Boat}	        &26.99	&26.94	&26.98	       &\textbf{26.99}	                \\\hline
\emph{Bike}	        &23.12	&23.01	&23.09         &\textbf{23.17}	                \\\hline
\emph{House}	    &29.87	&29.83	&29.86	  	   &\textbf{29.89}	                     \\\hline
\emph{Flower}	    &25.84	&25.78	&25.82	 	   &\textbf{25.87}	                     \\\hline
\emph{Parrot}	    &27.97	&27.95	&27.97	 	   &\textbf{28.02}	                   \\\hline\hline
Average 	        &26.46	&26.27	&26.39	  	   &\textbf{26.53}	                     \\\hline
\end{tabular}
\end{center}
\vspace{-0.3cm}
\end{table}


\section{Conclusion}
\label{sec:conclude}

In this paper, a novel soft decoding approach for the restoration of JPEG-compressed images is proposed. The main technical contribution is twofold. First, we propose a new graph-signal smoothness prior based on left eigenvectors of the random walk graph Laplacian with desirable image filtering properties, which can recover high DCT frequencies of piecewise smooth signals well. 
Second, we combine the Laplacian prior, sparsity prior and our new graph-signal smoothness prior into an efficient soft-decoding algorithm. 
Experimental results demonstrate that our method achieves better objective and subjective restoration quality compared to state-of-the-art soft decoding algorithms.

\bibliographystyle{IEEEtran}
\bibliography{ref}

\end{document}